\documentclass[letterpaper, 10 pt, conference]{IEEEtran}
\usepackage{etex}
\IEEEoverridecommandlockouts  

\overrideIEEEmargins
\usepackage{multicol}
\usepackage{graphicx,amsfonts,latexsym}
\usepackage[english]{babel}
\usepackage{amsmath,amsfonts,amssymb,amsthm}  
\usepackage{latexsym}
\usepackage{graphicx}
\usepackage[noend]{algpseudocode}  
\usepackage{xspace}
\usepackage{framed}
\usepackage{algpseudocode}
\usepackage{array,multirow,graphicx}
\usepackage{eurosym,color}
\usepackage{bm}
\usepackage{rotating}
\usepackage[ruled,vlined,linesnumbered]{algorithm2e}  

\newcommand{\red}[1]{\xspace{\color{red}#1}\xspace}

\pdfminorversion=4

\usepackage{amsthm}
\newtheoremstyle{mystyle}
  {}
  {}
  {\itshape}
  {}
  {\bfseries}
  {.}
  { }
  {\thmname{#1}\thmnumber{ #2}\thmnote{ (#3)}}
\theoremstyle{mystyle}



\newtheorem{theorem}{Theorem}

\newtheorem{definition}[theorem]{Definition}
\newtheorem{proposition}[theorem]{Proposition}


\newcommand{\bdmath}{\begin{dmath}}
\newcommand{\edmath}{\end{dmath}}
\newcommand{\beq}{\begin{equation}}
\newcommand{\eeq}{\end{equation}}
\newcommand{\bdm}{\begin{displaymath}}
\newcommand{\edm}{\end{displaymath}}
\newcommand{\bea}{\begin{eqnarray}}
\newcommand{\eea}{\end{eqnarray}}
\newcommand{\beal}{\beq \begin{array}{ll}}
\newcommand{\eeal}{\end{array} \eeq}
\newcommand{\beas}{\begin{eqnarray*}}
\newcommand{\eeas}{\end{eqnarray*}}
\newcommand{\ba}{\begin{array}}
\newcommand{\ea}{\end{array}}
\newcommand{\bit}{\begin{itemize}}
\newcommand{\eit}{\end{itemize}}
\newcommand{\ben}{\begin{enumerate}}
\newcommand{\een}{\end{enumerate}}



\newcommand{\calD}{{\cal D}}
\newcommand{\calE}{{\cal E}}

\newcommand{\calG}{{\cal G}}

\newcommand{\calV}{{\cal V}}



\newcommand{\etal}{\emph{et~al.}\xspace}


\newcommand{\M}[1]{{\bm #1}} 


\newcommand{\hide}[1]{}

\newcommand{\hiddenText}{{\color{gray} hidden text.}}
\newcommand{\hideWithText}[1]{\hiddenText}




\newcommand{\lone}{\ell_{1}}

\newcommand{\tran}{^{\mathsf{T}}}

\newcommand{\rank}[1]{\mathrm{rank}\left(#1\right)}

\newcommand{\inv}{^{-1}}

\newcommand{\zero}{{\mathbf 0}}
\newcommand{\eye}{{\mathbf I}}

\newcommand{\Real}[1]{ { {\mathbb R}^{#1} } }


\newcommand{\SOtwo}{\ensuremath{\mathrm{SO}(2)}\xspace}


\newcommand{\MM}{\M{M}}

\newcommand{\MR}{\M{R}}

\newcommand{\ML}{\M{L}}

\newcommand{\MX}{\M{X}}

\newcommand{\MZ}{\M{Z}}
\newcommand{\MSigma}{\M{\Sigma}}


\newcommand{\vv}{\M{v}}
\newcommand{\vt}{\M{t}}
\newcommand{\vxx}{\M{x}}

\newcommand{\vmu}{\M{\mu}}





\renewcommand{\ij}{_{ij}}

\newcommand{\sumalledges}{
     \displaystyle
     \sum_{(i,j) \in \calE}}


\newcommand{\gtwoo}{{\smaller\sf g2o}\xspace}
\newcommand{\dcs}{{\smaller\sf dcs}\xspace}


%
%

\newcommand{\cvx}{{\sf cvx}\xspace}





\newcommand{\noiset}{\vt^\epsilon}
\newcommand{\noiseR}{\MR^\epsilon}
\newcommand{\noiseTheta}{\theta^\epsilon}

\newcommand{\omegat}{\omega^t}
\newcommand{\omegaR}{\omega^R}
\newcommand{\oomegat}{w^t}
\newcommand{\oomegaR}{w^R}

\newcommand{\barx}{\bar{\vxx}}
\newcommand{\barR}{\bar{\MR}}
\newcommand{\bart}{\bar{\vt}}

\newcommand{\subject}{\text{s.t.}}
\newcommand{\hatMX}{\hat{\MX}}
\newcommand{\hatMZ}{\hat{\MZ}}

\newcommand{\hatMR}{\hat{\MR}}
\newcommand{\hatvt}{\hat{\vt}}

\newcommand{\constraints}{\substack{\vt_i \in \Real{2}  \\ \MR_i \in \SOtwo}}

\newcommand{\scale}{\kappa}
\newcommand{\mpipi}{(-\pi,+\pi]}
\newcommand{\zeropi}{[0,+\pi)}
\newcommand{\half}{\frac{1}{2}}
\newcommand{\ltwo}{\ell_2}
\newcommand{\PED}{\text{ExpPow}}
\newcommand{\vonMises}{\text{VonMises}}
\newcommand{\gaussian}{\text{Normal}}
\newcommand{\dirLaplace}{\text{DLaplace}}

\newcommand{\uniform}{\text{Uniform}} 

\newcommand{\res}{r}
\newcommand{\forFinal}[1]{}

\newcommand{\huber}{\text{h}}

\newcommand{\myParagraph}[1]{{\bf #1.}\xspace}

\usepackage{float}

\usepackage{hyperref}
\usepackage{graphicx}
\usepackage{epstopdf}
\usepackage{epsfig}

\newcommand{\cht}{\check{\vt}}
\newcommand{\chR}{\check{\MR}}
\newcommand{\tilt}{\tilde{\vt}}
\newcommand{\tilR}{\tilde{\MR}}
\newcommand{\noiseang}{\epsilon^{R}}
\renewcommand{\ML}{ML\xspace}
\newcommand{\PGO}{PGO\xspace}
\renewcommand{\red}[1]{#1}
\usepackage{tikz}

\usepackage{xr}

\title{\huge{Convex Relaxations for Pose Graph Optimization with Outliers}}

\author{Luca Carlone and Giuseppe C. Calafiore
\thanks{
L.\,Carlone is with the Laboratory for 
Information \& Decision Systems, Massachusetts Institute of Technology, Cambridge, USA, {\smaller \sf lcarlone@mit.edu}
}
\thanks{
G.\,Calafiore is with the Dipartimento di Elettronica e Telecomunicazioni, 
Politecnico di Torino, Torino, Italy, {\smaller \sf giuseppe.calafiore@polito.it}
}
}

\begin{document}

\maketitle

\begin{abstract}
\emph{Pose Graph Optimization} \red{involves} the estimation of a set of poses from pairwise measurements 
and provides a formalization for many problems arising in mobile robotics and geometric computer vision.
In this paper, we consider the case in which a subset of the measurements fed to pose graph 
optimization is spurious.
Our first contribution is to develop robust estimators that can cope with heavy-tailed measurement noise, 
hence increasing robustness to the presence of outliers.
Since the resulting estimators require solving nonconvex optimization problems, 
we further develop 
convex relaxations that approximately solve those problems via semidefinite programming.
We then provide conditions under which the proposed relaxations are exact. 
Contrarily to existing approaches, our convex relaxations do not rely on the availability 
of an initial guess for the unknown poses,  hence they are more suitable for setups in which such guess is 
not available (e.g., multi robot localization, recovery after localization failure). 
We tested the proposed techniques in extensive simulations,
and we show that some of the proposed relaxations are indeed
 \emph{tight} (i.e., they solve the original nonconvex problem exactly)
 and ensure accurate estimation in the face of a large number of outliers.  
\end{abstract}

\begin{tikzpicture}[overlay, remember picture]
\path (current page.north east) ++(-4,-0.2) node[below left] {
This paper has been accepted for publication in the IEEE Robotics and Automation Letters.
};
\end{tikzpicture}
\begin{tikzpicture}[overlay, remember picture]
\path (current page.north east) ++(-4.1,-0.6) node[below left] {
 Please cite the paper as:
L. Carlone and G. Calafiore,
``Convex Relaxations for Pose Graph
};
\end{tikzpicture}
\begin{tikzpicture}[overlay, remember picture]
\path (current page.north east) ++(-4.6,-1) node[below left] {
 Optimization with Outliers'', IEEE Robotics and Automation Letters (RA-L), 2018.
};
\end{tikzpicture}


\section{Introduction}
\label{intro}

Pose Graph Optimization (\PGO) consists in the estimation of a set of poses (i.e., rotations and translations) 
from pairwise relative pose measurements. This model often arises from a \emph{maximum likelihood} (\ML) approach to 
geometric estimation problems in robotics and computer vision. 
For instance, in the context of 
robot \emph{Simultaneous Localization and Mapping} (SLAM), \PGO is used to estimate the
 trajectory of the robot (as a discrete set of poses), which \red{in turn} allows 
 the reconstruction of 
a map of the environment, see, e.g.,~\cite{Cadena16tro-SLAMsurvey}.  
Another example is multi robot localization, in which 
multiple robots estimate their poses from pairwise 
 measurements, see~\cite{Aragues11icra}. 
Similarly, a variant of \PGO arises in \emph{Structure from Motion} (SfM) in computer vision,
as a workhorse to estimate the poses (up to an unknown scale) of the cameras observing a 3D scene. 

\PGO leads to a hard non-convex optimization problem, whose (global) solution is the 
\ML estimate for the unknown poses.
While a standard approach to solve \PGO was to use iterative 
nonlinear optimization methods, such as the Gauss-Newton method~\cite{Kaess12ijrr,Kuemmerle11icra} or
the gradient method~\cite{Grisetti09its,Olson06icra}, 
to obtain locally optimal solutions,
a very recent set of works \red{shows how to}  compute \emph{globally}  
optimal solutions to \PGO via 
convex relaxations~\cite{Carlone15icra-verify,Carlone15iros-duality3D,Carlone16tro-duality2D,Rosen16wafr-sesync}.
These works demonstrate that in the noise regimes encountered in practical applications, the 
(non-convex) \PGO problem exhibits \emph{zero duality gap}, which implies that it can 
be solved exactly via convex relaxations. 

An outstanding problem in \PGO \red{is how to make} the estimation robust to the 
presence of spurious measurements.
Standard \PGO assumes a nominal distribution for the measurement noise (e.g., Gaussian noise on translation measurements), and it produces 
largely incorrect estimates in presence of \emph{outliers}, i.e., 
measurements that move away from this nominal distribution.
This issue limits robust operation in practical applications in which 
the presence of
outliers is unavoidable. 
Outliers may be due to sensor failures, but 
they are more commonly associated to incorrect \emph{data association}, see~\cite{Cadena16tro-SLAMsurvey}.

\myParagraph{Related Work}
 We partition the existing literature into 
 \emph{outlier mitigation} and \emph{outlier rejection} techniques.
The former techniques estimate the poses while trying to reduce the influence of the outliers.
The latter explicitly include binary decision variables to establish whether a given 
measurement is an outlier or not.
Traditionally, outlier mitigation in SLAM and SfM relied on the 
use of robust M-estimators, see~\cite{Huber81}. 
For instance, the use of the Huber loss is a fairly standard choice in SfM, see~\cite{Hartley00}.
Along this line, 
Agarwal \etal, in~\cite{Agarwal13icra}, dynamically adjust the measurement covariances to 
reduce the influence of measurements with large errors.
Olson and Agarwal, in~\cite{Olson12rss}, use  a max-mixture distribution to accommodate multiple 
hypotheses on the noise distribution of a measurement. 
Casafranca~\etal, in~\cite{Casafranca13iros}, propose to minimize the $\lone$-norm of the residual 
errors and design an iterative scheme to locally solve the corresponding optimization. 
Lee~\etal, in~\cite{Lee13iros}, use expectation maximization. 
\red{Pfingsthorn and Birk~\cite{Pfingsthorn16ijrr} 
 model ambiguous measurements using hyperedges and multimodal Mixture of Gaussian constraints that augment the pose graph.}

Outlier rejection techniques aim at explicitly identifying the spurious measurements.
A popular technique is RANSAC, see~\cite{Fischler81}, in which subsets of the measurements are sampled 
in order to identify an outlier-free set.
S\"{u}nderhauf and Protzel, in~\cite{Sunderhauf12iros,Sunderhauf12icra},
propose to augment the pose graph optimization problem with latent binary variables that are responsible for 
deactivating outliers. 
Similar ideas appear in the context of robust estimation, 
e.g., 
the \textit{Penalized Trimmed Squares} estimator of Zioutas and Avramidis, \cite{Zioutas05amas}.
Latif \etal in~\cite{Latif12rss} and Graham \etal in~\cite{Graham15iros}
look for ``internally consistent'' constraints, which are in mutual agreement. 
Carlone \etal~in \cite{Carlone14iros} use $\lone$-relaxation to find 
a large set of mutually-consistent measurements.

A main drawback of the techniques mentioned above is that they rely on the availability of an initial estimate of the poses. 
This is problematic for two reasons. First, the performance of these techniques 
heavily depends on the initial guess and they perform poorly when the initial guess 
is noisy, as it happens in practice. Second, in many applications, the initial 
guess is simply not available (e.g., in multi robot localization). 
Recent work in computer vision attempts to overcome the need for an initial guess. 
In particular, Wang and Singer in~\cite{Wang13ima} provide a convex relaxation for robust rotation estimation.
In this paper, we extend~\cite{Wang13ima} to work on poses (rather than rotations), and consider a broader 
 set of robust cost functions.

\myParagraph{Contribution}
Our first contribution is to propose robust \PGO formulations that can cope with heavy-tailed 
measurement noise. We consider cost functions with unsquared $\ltwo$ norm, $\lone$ norm, and 
Huber loss, and, when possible, we provide a probabilistic justification in terms of \ML estimation. 

Since the resulting optimization problems are nonconvex, 
our second contribution is to provide a systematic way to derive convex relaxations 
for those problems, taking the form of semidefinite programs (SDP). 
The key advantage of our convex relaxations is that 
they do not require an initial guess, which makes them suitable for problem instances where such estimate is not available or unreliable. 

As a third contribution, we provide conditions under which the proposed relaxations are exact, 
as well as general bounds on the suboptimality of the relaxed solution.
Similarly to related works such as~\cite{Carlone16tro-duality2D,Rosen16wafr-sesync}, 
these conditions involve the rank of a matrix appearing in the semidefinite relaxation.

Finally, we test the performance of the proposed relaxations in extensive Monte Carlo simulations.
The experimental results show that a subset of the convex relaxations discussed in this paper 
are indeed tight, and work extremely well in simulated problem instances, showing insensitivity 
to a large amount of outliers, \red{and outperforming related techniques}. 
\red{The paper is tailored to 2D \PGO problems, while the approaches and the theoretical guarantees 
trivially extend to the 3D setup.}

\myParagraph{Notation} We use lower and upper case bold letters to denote vectors 
($\vv$) and matrices ($\MM$), respectively. 
Non-bold face letters are used for scalars ($j$) and function names ($f(\cdot)$). 
The identity matrix of size $n$ is denoted with $\eye_n$. An $m \times n$ zero 
matrix is denoted by $\zero_{m\times n}$. 
The symbols $\|\cdot\|_1$, $\|\cdot\|_2$, and $\|\cdot\|_F$ denote the $\ell_1$, $\ell_2$, and Frobenious 
norm, respectively.  
Given a random variable $x$, we use the notation $\calD(x;\beta_1,\beta_2,\ldots)$ to denote  
the probability density of $x$ with parameters $\beta_1,\beta_2,\ldots$, and we also write 
$x \sim \calD(\beta_1,\beta_2,\ldots)$.


\section{Pose Graph Optimization}
\label{sec:PGO}
\PGO estimates $n$ poses from $m$ relative pose measurements.
We consider a planar setup, in which each to-be-estimated pose $\vxx_i \doteq (\MR_i,\vt_i)$, $i=1,\ldots,n$,
 comprises a \emph{translation} vector $\vt_i \in \Real{2}$  
and a rotation matrix $\MR_i \in \SOtwo$.
For a pair of poses $(i,j)$, 
a relative pose measurement $\barx\ij \doteq (\barR\ij,\bart\ij)$, with $\bart\ij \in \Real{2}$ and 
$\barR\ij \in \SOtwo$,
 describes a noisy measurement of 
the relative pose between $\vxx_i$ and $\vxx_j$. 
Each measurement is assumed to be sampled from the following
\emph{generative  model}:
\begin{align}
\label{eq:noisyModel}
\bart\ij = \MR_i\tran (\vt_j - \vt_i ) + \noiset\ij, \quad \quad \barR_{ij} = \MR_i\tran \MR_j \noiseR_{ij}
\end{align}
where $\noiset\ij \in \Real{2}$ 
and $\noiseR_{ij} \in \SOtwo$ represent translation and rotation measurement noise, respectively.

The problem can be elegantly modeled through graph formalism: 
each to-be-estimated pose \red{is associated to} a vertex (or node) in the vertex set ${\calV}$  of 
a directed graph, while each  measurement is associated to an edge in 
the edge set $\calE$ of the graph. We denote by $n\doteq |\calV|$  the number of nodes
and by $m\doteq |\calE|$  the number of edges.
The resulting graph, namely ${\calG}({\calV},{\calE})$, is usually referred to as a \emph{pose graph}. 

\subsection{Standard \PGO}
\label{sec:standardPGO}

Given the generative model~\eqref{eq:noisyModel}, standard approaches to \PGO 
formulate the pose estimation problem in terms of \ML estimation: 
the best pose estimate is the one maximizing the likelihood of the available measurements
 $\barx\ij, \forall (i,j) \in \calE$. 
 Since the expression of the measurement likelihood depends on the assumed distribution of the measurement noise,
 an ubiquitous assumption in this endeavour is that the translation noise $\noiset\ij$ 
 is distributed according to a zero-mean Normal distribution with given covariance $\MSigma^t\ij$, 
 i.e., $\noiset\ij \sim \gaussian(\zero,\MSigma^t\ij)$. 
For the rotation noise, it has been recently proposed to use the Von Mises distribution as noise model, as 
this leads to simpler estimators and it is easier to analyze, see~\cite{Carlone16tro-duality2D}.
In particular, it is assumed that $\noiseR\ij$ in~\eqref{eq:noisyModel} 
represents a rotation of a random angle $\noiseTheta$, distributed according 
to the Von Mises distribution:
\beq
\label{eq:wrappedGaussian}
\vonMises(\noiseTheta; \mu, \kappa) = 
c(\kappa) \exp(\kappa \cos(\noiseTheta - \mu) )
\eeq
with mean $\mu$, concentration parameter $\kappa$, and 
where $c(\kappa)$ is a normalization constant that is irrelevant for \ML estimation. 

 These choices of the noise model lead to the following standard \PGO formulation: 

 \begin{proposition}[Standard \PGO formulation]
 \label{prop:standardPGO}
 Given the generative model~\eqref{eq:noisyModel} and assuming that the 
 translation noise is $\noiset\ij \sim \emph{\gaussian}(\zero, \frac{1}{\omegat\ij} \eye_2)$  
 and that the rotation noise is $\noiseR\ij \sim \emph{\vonMises}(\zero, \omegaR\ij)$,  the 
 maximum likelihood estimate of the poses is the solution of the following minimization:
 \beq
\!\!\!\min_{\constraints}  \sumalledges \!\!
\omegat\ij \|\MR_{i}\tran(\vt_j\!-\!\vt_i)\!-\!\bart_{ij}\|^2_2 
\!+\! \frac{\omegaR_{ij}}{2} \|\MR_{i}\tran \!\MR_j -\barR_{ij}\|^2_F .
\label{eq:Squaredl2OnPosesPrimal}
\eeq
 \end{proposition}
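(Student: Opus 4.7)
The plan is to derive the negative log-likelihood of the joint set of measurements under the stated noise assumptions and show that, up to a constant and an overall positive scaling, it matches the objective in~\eqref{eq:Squaredl2OnPosesPrimal}. I would first invoke independence of the measurements across edges, so that the joint likelihood factors as a product over $(i,j)\in\calE$ of the translation and rotation likelihoods, and minimizing the negative log-likelihood therefore amounts to summing the per-edge contributions.

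For the translation part, solving~\eqref{eq:noisyModel} for $\noiset\ij$ gives $\noiset\ij = \bart\ij - \MR_i\tran(\vt_j-\vt_i)$. Since $\noiset\ij \sim \gaussian(\zero,\frac{1}{\omegat\ij}\eye_2)$, the negative log-likelihood of the translation measurement is, up to a constant that does not depend on the poses, $\frac{\omegat\ij}{2}\|\MR_i\tran(\vt_j-\vt_i)-\bart\ij\|_2^2$, matching (after multiplication by $2$) the first summand in~\eqref{eq:Squaredl2OnPosesPrimal}.

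The main step is the rotation part. From~\eqref{eq:noisyModel}, the angle $\noiseTheta\ij$ of the noise rotation $\noiseR\ij = \barR\ij\tran \MR_i\tran \MR_j$ is a deterministic function of $\MR_i,\MR_j,\barR\ij$. Its negative log-likelihood under the Von Mises density~\eqref{eq:wrappedGaussian} (with $\mu=0$ and concentration $\omegaR\ij$) equals, up to a $\kappa$-dependent constant absorbed in $c(\omegaR\ij)$, $-\omegaR\ij\cos(\noiseTheta\ij)$. The key identity I would use is the planar relation between the trace and the Frobenius distance of $\SOtwo$ matrices: for any $\MU,\MV\in\SOtwo$, $\|\MU-\MV\|_F^2 = 2\trace(\eye_2) - 2\trace(\MU\tran\MV) = 4 - 4\cos(\phi)$, where $\phi$ is the angle of $\MU\tran\MV$. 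Applying this to $\MU = \MR_i\tran\MR_j$ and $\MV = \barR\ij$ yields $\cos(\noiseTheta\ij) = 1 - \tfrac{1}{4}\|\MR_i\tran\MR_j-\barR\ij\|_F^2$, so the per-edge rotation negative log-likelihood becomes $\frac{\omegaR\ij}{4}\|\MR_i\tran\MR_j-\barR\ij\|_F^2$ up to a pose-independent constant.

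Summing the translation and rotation contributions across all edges, dropping constants that do not affect the $\argmin$, and multiplying the whole objective by the positive constant $2$ recovers exactly the cost in~\eqref{eq:Squaredl2OnPosesPrimal}, subject to the stated constraints $\vt_i\in\Real{2}$ and $\MR_i\in\SOtwo$. The only nontrivial step is the trace-to-Frobenius identity specific to $\SOtwo$; everything else is bookkeeping. I do not anticipate an obstacle beyond being careful that the additive constants and the $\cos(\mu-\noiseTheta)$ form of the Von Mises density match up correctly with $\mu=0$.
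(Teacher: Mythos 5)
Your derivation is correct and is essentially the expected one: the paper gives no proof of Proposition~\ref{prop:standardPGO} but defers to \cite[Proposition 1]{Carlone16tro-duality2D}, which proceeds exactly as you do --- per-edge factorization of the likelihood, the Gaussian negative log-likelihood $\frac{\omegat\ij}{2}\|\MR_i\tran(\vt_j-\vt_i)-\bart\ij\|_2^2$ for translations, and the \SOtwo identity $\|\MU-\MV\|_F^2 = 4-4\cos\phi$ to convert the Von Mises exponent $-\omegaR\ij\cos(\noiseTheta\ij)$ into the squared Frobenius term, with constants dropped and an overall factor of $2$ applied. The only cosmetic slip is that $\barR\ij\tran\MR_i\tran\MR_j$ equals $(\noiseR\ij)\tran$ rather than $\noiseR\ij$, which is harmless because the zero-mean Von Mises density is even in the angle.
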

%
A proof of this statement is given  in~\cite[Proposition 1]{Carlone16tro-duality2D}.
The estimation in~\eqref{eq:Squaredl2OnPosesPrimal} involves a nonconvex optimization 
problem, due to the nonconvexity of the set $\SOtwo$.
Surprisingly, 
recent results~\cite{Carlone16tro-duality2D,Rosen16wafr-sesync} show that 
one can still compute a globally optimal solution to~\eqref{eq:Squaredl2OnPosesPrimal}, 
whenever the measurement noise is reasonable, using 
convex relaxations. 

\section{Robust \PGO formulations}
\label{sec:1}


The ML estimator  in Proposition~\ref{prop:standardPGO} 
 is known to perform arbitrarily bad  in the presence of even a single outlier.
 This is due to the quadratic nature of the cost, for which
 measurements with large residual errors dominate the other terms.
 This in turn depends on the fact that the estimator assumes
a light-tailed noise distribution, i.e., the Normal distribution.
Therefore, a natural way to gain robustness is to change the noise 
assumptions to take into account the presence of measurements with larger 
errors, and use noise distributions with ``heavier-than-normal tails,'' see, e.g.,~\cite{Kotz}. 
In the rest of this section we consider three alternative noise models and 
their induced cost functions. 
Since the corresponding optimization problems remain nonconvex, 
we discuss how to relax these formulation to convex programs in Section~\ref{sec:relaxAndRound}-\ref{sec:2stage}.


\subsection{Least Unsquared Deviation: $\ell_2$-norm cost}
\label{sec:UnsquaredL2norm}


\red{We next introduce two distributions: 
the \emph{multivariate exponential power distribution}, which we later use 
to model the translation noise, and the \emph{directional Laplace distribution} 
which we use to model noise in \SOtwo.} 

\begin{definition}[Multivariate Exponential Power distribution]
\label{def:expPower}
For  a random variable $\vxx \in \Real{d}$, the \emph{multivariate exponential power distribution}
with parameters $\vmu \in \Real{n}$, $\MSigma \succ 0$, and $\beta > 0$ 
is defined as: 
\beq
\!\!\!\!\!\!\!\PED(\vxx;\mu,\Sigma,\beta) = 
c \exp\left( - \half 
[(\vxx\!-\!\vmu)\tran \MSigma\inv (\vxx\!-\!\vmu)]^\beta
\label{eq:mepd}
\right)
\eeq
where $c$ is a normalization constant, independent of $\vxx$.
\end{definition}

\begin{definition}[Directional Laplace Distribution]
\label{def:dirLaplace}
For  a random variable $\theta \in \mpipi$, the Directional Laplace Distribution
with mean $\mu \in \mpipi$ and scale parameter $\scale > 0$ is defined as:
\beq
\label{eq:dirLaplace}
\dirLaplace(\theta;\mu,\scale) = c \exp\left( - \scale
\left| \sin\left( \frac{\theta \!-\! \mu}{2}\right) \right| \right)
\eeq
where $c$ is a normalization constant, independent of $\theta$.
\end{definition}

Definition~\ref{def:dirLaplace} is a slight variation of the definition 
given in~\cite{Mitianoudis}, in that it considers $\mpipi$ as the angular domain, 
rather than $\zeropi$. 
Using Definitions~\ref{def:expPower}-\ref{def:dirLaplace}, we can now introduce our first robust estimator.

\begin{proposition}[Least Unsquared Deviation Estimator]
\label{prop:l2OnPosesPrimal}
Given the generative model~\eqref{eq:noisyModel} and assuming that the 
 translation noise is $\noiset\ij \sim \PED(\zero,\frac{1}{(\oomegat\ij)^2} \eye_2,\half)$  
 and the rotation noise is $\noiseR\ij \sim \dirLaplace(\zero, \oomegaR\ij)$, then the 
  \ML estimate of the poses is the solution of the following minimization problem:
  \beq
\!\!\!\min_{\constraints}  \sumalledges \!\!
\oomegat\ij \|\MR_{i}\tran(\vt_j\!-\!\vt_i)\!-\!\bart_{ij}\|_2 
\!+\! \frac{\oomegaR_{ij}}{\sqrt{2}} \|\MR_{i}\tran \!\MR_j - \barR_{ij}\|_F .
\label{eq:l2OnPosesPrimal}
\eeq
\end{proposition}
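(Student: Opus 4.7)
The plan is to derive~\eqref{eq:l2OnPosesPrimal} as the negative log-likelihood of the joint measurement distribution, mirroring the template used for~\eqref{eq:Squaredl2OnPosesPrimal} in Proposition~\ref{prop:standardPGO}. Assuming the noises $\{\noiset\ij\}_{(i,j)\in\calE}$ and $\{\noiseR\ij\}_{(i,j)\in\calE}$ are mutually independent across edges and between the translation and rotation channels, the joint log-likelihood factors as a sum of per-edge contributions. Inverting the generative model~\eqref{eq:noisyModel}, the translation residual is $\noiset\ij = \bart\ij - \MR_i\tran(\vt_j - \vt_i)$ and the rotation residual is $\noiseR\ij = \MR_j\tran \MR_i \barR\ij$; each density should then be evaluated at these residuals.

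For the translation channel, I would substitute $\noiset\ij$ into the exponential-power density~\eqref{eq:mepd} with $\vmu = \zero$, $\MSigma = (\oomegat\ij)^{-2}\eye_2$, and $\beta = \half$. The quadratic form inside the bracket collapses to $(\oomegat\ij)^{2}\|\noiset\ij\|_2^{2}$, whose square root is simply $\oomegat\ij\|\noiset\ij\|_2$. Taking the negative log then contributes $\tfrac{\oomegat\ij}{2}\|\bart\ij - \MR_i\tran(\vt_j - \vt_i)\|_2$ per edge, up to additive terms independent of the poses.

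For the rotation channel, the main technical ingredient is a trigonometric identity for planar rotations: for the $2\times 2$ rotation $\MR_\alpha$ by angle $\alpha$, a direct expansion yields $\|\MR_\alpha - \eye_2\|_F^2 = 4(1-\cos\alpha) = 8\sin^2(\alpha/2)$, so $|\sin(\alpha/2)| = \tfrac{1}{2\sqrt{2}}\|\MR_\alpha - \eye_2\|_F$. Applying this identity to the residual angle $\noiseTheta\ij$ and exploiting the orthogonal invariance of the Frobenius norm to rewrite $\|\MR_j\tran \MR_i\barR\ij - \eye_2\|_F = \|\MR_i\tran \MR_j - \barR\ij\|_F$, the negative log of~\eqref{eq:dirLaplace} contributes $\tfrac{\oomegaR\ij}{2\sqrt{2}}\|\MR_i\tran \MR_j - \barR\ij\|_F$ per edge.

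Summing over $\calE$, discarding pose-independent constants, and rescaling the objective by the harmless factor of $2$ (which does not change the argmin) reproduces exactly the cost in~\eqref{eq:l2OnPosesPrimal}. The only non-routine step is the Frobenius/half-angle identity combined with the orthogonal-invariance manipulation that converts the raw residual $\MR_j\tran \MR_i\barR\ij - \eye_2$ into the symmetric-looking $\MR_i\tran \MR_j - \barR\ij$; the remaining work is standard log-likelihood bookkeeping analogous to the proof of Proposition~\ref{prop:standardPGO}.
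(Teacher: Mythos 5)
Your proposal is correct and follows essentially the same route as the paper's own (sketched) derivation: write the negative log-likelihood edge by edge, substitute the residuals from the generative model into the exponential-power and directional-Laplace densities, and convert $|\sin(\cdot/2)|$ into a Frobenius norm via the half-angle identity and orthogonal invariance. Your identity $\|\MR_\alpha-\eye_2\|_F=2\sqrt{2}\,|\sin(\alpha/2)|$ is the correct form of the relation the paper invokes (whose displayed version omits the half angle), and the uniform factor of $2$ you absorb at the end is indeed harmless.
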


The derivation of the ML estimator of Proposition~\ref{prop:l2OnPosesPrimal} 
is straightforward, and proceeds by inspection, starting from the noise distributions~\eqref{eq:mepd}
and~\eqref{eq:dirLaplace}, and using the relation
 \begin{align}
\|\MR_{i}\tran \!\MR_j \!-\!\barR_{ij}\|_F = 2\sqrt{2}|\sin(\bar{\theta}\ij - \theta_j + \theta_i)|,
\end{align}
where $\bar{\theta}\ij$, $\theta_j$, and $\theta_i$ are the rotation angles corresponding to 
$\barR_{ij}$, $\MR_j$, and $\MR_i$, respectively.

The estimator~\eqref{eq:l2OnPosesPrimal} is similar to 
the standard \PGO estimator~\eqref{eq:Squaredl2OnPosesPrimal}, except for the fact that 
the $\ltwo$-norm and the Frobenius norm are unsquared. 
The intuition behind the use of~\eqref{eq:l2OnPosesPrimal} is that, 
since the cost is no longer quadratic, it does not favor 
measurements with large residuals.   

\subsection{Least Absolute Deviation: $\ell_1$-norm cost}
\label{sec:L1norm}

\red{In this section we consider a formulation in which the cost includes 
the $\lone$-norm rather than the Euclidean norm.}
%
\begin{definition}[Least Absolute Deviation Estimator]
\label{def:l1OnPosesPrimal}
The Least Absolute Deviation Estimator of the poses in the pose 
graph is the solution of the following minimization problem:
\beq
\!\!\!\min_{\constraints}  \sumalledges \!\!
\oomegat\ij \|\MR_{i}\tran(\vt_j\!-\!\vt_i)\!-\!\bart_{ij}\|_1 
\!+\! \frac{\oomegaR_{ij}}{2} \|\MR_{i}\tran \!\MR_j \!-\!\barR_{ij}\|_1 
\label{eq:l1OnPosesPrimal}
\eeq
\end{definition}
The formulation in Definition~\ref{def:l1OnPosesPrimal} 
adopts an $\lone$-norm penalty, which is known to be less sensitive 
to outliers, see, e.g.,~\cite[p.\ 10]{Rousseeuw81}, hence we expect this formulation 
to perform better in presence of spurious measurements. 
However, contrarily to Proposition~\ref{prop:l2OnPosesPrimal}, 
we currently only have a partial probabilistic justification for the 
cost~\eqref{eq:l1OnPosesPrimal}. More precisely, while 
it is known that the first term in the cost~\eqref{eq:l1OnPosesPrimal} follows from the assumption that the 
translation noise is distributed according to a Laplace distribution, see~\cite[p. 10]{Rousseeuw81}, 
the choice of the second term ($\|\MR_{i}\tran \!\MR_j \!-\!\barR_{ij}\|_1$) 
is currently arbitrary, and only justified 
by the symmetry w.r.t.\ the first term.

\subsection{Huber loss}
\label{sec:HuberFunction}

In this section we consider a popular M-estimator, based on the Huber loss function.
While this is a commonly used robust estimator in SLAM and SfM, 
our novel contribution is to provide a convex relaxation, which is discussed in   Section~\ref{sec:relaxAndRound}.

\begin{definition}[Huber Estimator]
\label{def:huberOnPosesPrimal}
The Huber Estimator of the poses in the pose 
graph is the minimizer of the following optimization problem:
\beq
\!\!\!\!\!\!\min_{\constraints}  \sumalledges \!\!
 \huber(\oomegat\ij \|\MR_{i}\tran(\vt_j\!-\!\vt_i)\!-\!\bart_{ij}\|_2)
+
 \huber(\oomegaR_{ij} \|\MR_{i}\tran \!\MR_j \!-\!\barR_{ij}\|_F) 
\label{eq:huberOnPosesPrimal}
\eeq
where $\huber(\cdot)$ is the Huber loss function, defined as:
\bea
\huber(x) = 
\left\{
\begin{array}{ll}
|x|^2 & |x|\leq 1 \\
2 |x| - 1 & \text{otherwise}.
\end{array}
\right.
\label{eq:huber}
\eea
\end{definition}
The Huber loss in Eq.~\eqref{eq:huber} is a quadratic function when the argument belongs to the 
interval $[-1,+1]$, while it is linear otherwise. 
Ideally, 
the inliers should fall in the quadratic region, where the 
Huber estimator behaves as the least squares estimator of Proposition~\ref{prop:standardPGO};
on the other hand, the outliers should ideally fall in the linear region, in which the 
Huber loss behaves as the least unsquared deviation estimator
of Proposition~\ref{prop:l2OnPosesPrimal}.
We note that while an alternative definition of the Huber loss, e.g.,~\cite[p. 619]{Hartley00},
includes an extra parameter $\delta$ that defines the size of the 
quadratic region ($[-\delta,+\delta]$), we are implicitly using 
the terms $\oomegat\ij$ and $\oomegaR_{ij}$ to define the region in which 
the cost is quadratic. For instance, in the first term of~\eqref{eq:huberOnPosesPrimal},
the Huber loss becomes quadratic when:
\beq
\oomegat\ij \|\MR_{i}\tran(\vt_j\!-\!\vt_i)\!-\!\bart_{ij}\|_2 \leq 1
\Leftrightarrow  \|\MR_{i}\tran(\vt_j\!-\!\vt_i)\!-\!\bart_{ij}\|_2  \leq  \frac{1}{\oomegat\ij} 
\eeq
i.e., the terms $\oomegat\ij$ and $\oomegaR_{ij}$  define the boundaries between 
the quadratic and the linear behavior. 

\forFinal{
We conclude this section by mentioning that the Huber loss has a 
probabilistic justification in terms of maximum likelihood estimation, and it 
follows from the assumption that the density of the measurement noise 
is a Huber density. 
}


\section{Convex Relaxations and Rounding Procedure}
\label{sec:relaxAndRound}

In this section we discuss a systematic approach for deriving convex relaxations for 
the problems~\eqref{eq:l2OnPosesPrimal},~\eqref{eq:l1OnPosesPrimal},~\eqref{eq:huberOnPosesPrimal}.
We refer to the approaches presented in this section as \emph{1-stage techniques} 
since they require the solution of a single convex program.
In Section~\ref{sec:2stage}, instead, we propose an alternative way to solve 
problems~\eqref{eq:l2OnPosesPrimal},~\eqref{eq:l1OnPosesPrimal},~\eqref{eq:huberOnPosesPrimal}, by decoupling rotation and translation estimation. 
 We refer to the latter as \emph{2-stage techniques}.
%
\subsection{Convex Relaxations of Robust PGO Formulations}

We consider each formulation in Section~\ref{sec:1}.

\subsubsection{Relaxation of the $\ell_2$-norm formulation}
Problem~\eqref{eq:l2OnPosesPrimal} is nonconvex and therefore hard to solve globally. 
The source of nonconvexity is the constraint $\M\MR_i \in \SOtwo$, while the 
cost can be made convex (in fact, quadratic) by rearranging the rotations $\M\MR_i$ 
and leveraging the invariance to rotation of the $\ltwo$ norm.
While in our previous work~\cite{Carlone16tro-duality2D} we used a quadratic 
reformulation of the cost, in the following we propose a more direct relaxation approach.

The first step towards our convex relaxation is 
to reparametrize Problem~\eqref{eq:l2OnPosesPrimal}. 
We rearrange the unknown poses $(\MR_i, \vt_i)$, with $i=1,\ldots,n$, 
into a single matrix $\MZ$:
\begin{gather}
\label{eq:MZ}
 \MZ \doteq [ \MR_1 \; \dots \; \MR_n \; \vt_1 \; \dots \; \vt_n]	 \in \Real{2\times3n}
\end{gather}

Moreover we define the following square matrix: 
\bea
\label{eq:MX}
\MX \doteq \MZ\tran \MZ \!\!\!
&\!\!\! = \!\!\!& \!\!\!
\left[\begin{array}{c c c|c c c} \MR_1\tran \MR_1 & \dots & \MR_1\tran \MR_n & \MR_1\tran \vt_1 & \dots & \MR_1\tran \vt_n  \\ 
 \vdots & \ddots & \vdots & \vdots & \ddots & \vdots \\ 
 \MR_n\tran  \MR_1 & \dots & \MR_n\tran \MR_n &  \MR_n\tran  \vt_1 & \dots & \MR_n\tran \vt_n \\
\hline \vt_1\tran \MR_1 & \dots & \vt_1\tran \MR_n & \vt_1\tran \vt_1 & \dots & \vt_1\tran \vt_n \\ 
 \vdots & \ddots & \vdots & \vdots & \ddots & \vdots \\ 
 \vt_n\tran  \MR_1 & \dots & \vt_n\tran \MR_n &  \vt_n\tran  \vt_1 & \dots & \vt_n\tran \vt_n \end{array} \right] \!\!\!\nonumber\\
\label{blockstruct}
&\!\!\!\doteq\!\!\!& \left[\begin{array}{c|c} \MX^{RR} & \MX^{Rt} \\  \hline \MX^{tR} & \MX^{tt}  \end{array} \right]   \in \Real{3n\times3n} \!\!\!
\eea
where we distinguished 4 block matrices within $\MX$, with 
$\MX^{RR} \in \Real{2n \times 2n}$, $\MX^{tt} \in \Real{n \times n}$,
and $\MX^{Rt} = (\MX^{tR})\tran \in \Real{2n \times n}$. 
%
\red{Problem~\eqref{eq:l2OnPosesPrimal} can be written as function of $\MZ$ and $\MX$:}
 \bea
\!\!\!\min_{\MZ,\MX} & \sumalledges \!\!
\oomegat\ij \|[\MX]^{Rt}_{ij}\!-\![\MX]^{Rt}_{ii}\!-\!\bart_{ij}\|_2 
\!+\! \frac{\oomegaR_{ij}}{\sqrt{2}} \|[\MX]^{RR}_{ij} \!-\!\barR_{ij}\|_F  \nonumber \\
\subject & \; \MX = \MZ\tran \MZ , \; [\MX]^{RR}_{ii}=\eye_2, \; 
\red{\det{[\MZ]_{i}}=+1}, \nonumber \\
& i=1,\dots,n 
\label{eq:l2OnPosesPrimalZX}
\eea
where we used the notation $[\MX]^{RR}_{ij}$ to identify the $2\times 2$ block entry 
of $\MX^{RR}$ at row $i$ and column $j$, and $[\MX]^{Rt}_{ij}$ to identify the $2 \times 1$ block entry 
of $\MX^{Rt}$, according to the partition shown in~\eqref{blockstruct}.
The optimization problem imposes the constraint that $\MX$ 
can be computed from $\MZ$, since $\MX = \MZ\tran \MZ$, while the constraints 
$[\MX]^{RR}_{ii} = \MR_i\tran \MR_i = \eye_2$ and $\red{\det{[\MZ]_{i}}=+1}$ enforce that the estimated 
rotation matrices are orthogonal.\footnote{By definition 
$\SOtwo \doteq \{\MR\in\Real{2\times 2}\!\!: \!\MR\tran \! \MR \!=\! \MR \!\MR\tran\!\!=\! \eye_2, \det(\MR)\!=\!+1\}$.}
In the following, 
we drop the determinant constraint for simplicity, and perform estimation over the \emph{Orthogonal group}
 rather than the Special Orthogonal group
\SOtwo. Empirical evidence from this and previous works, such as~\cite{Carlone15iros-duality3D}, shows that dropping the determinant constraint does not impact the relaxation.

Now we note that the constraint $\MX = \MZ\tran \MZ$ is equivalent to 
(i) $\MX \succeq 0$, and 
(ii) $\rank{\MX} = 2$, hence we can rewrite~\eqref{eq:l2OnPosesPrimalZX} in the sole variable $\MX$ as follows:
 \bea
\!\!\!\min_{\MX} & \sumalledges \!\!
\oomegat\ij \|[\MX]^{Rt}_{ij}\!-\![\MX]^{Rt}_{ii}\!-\!\bart_{ij}\|_2 
\!+\! \frac{\oomegaR_{ij}}{\sqrt{2}} \|[\MX]^{RR}_{ij} \!-\!\barR_{ij}\|_F  \nonumber \\
\subject & \;\; \MX \succeq 0, \; \rank{\MX} = 2, \; [\MX]^{RR}_{ii}=\eye_2, i=1,\dots,n 
\label{eq:l2OnPosesPrimalX}
\eea
Problem~\eqref{eq:l2OnPosesPrimalX} has a convex objective and the only nonconvex 
constraint is the rank constraint on $\MX$.
Therefore, to obtain a convex problem we  drop the rank constraint 
and we obtain the following convex relaxation.

\begin{proposition}[Convex Relaxation with $\ltwo$-norm]
\label{prop:l2OnPosesPrimal-relax}
The following semidefinite program is a convex relaxation of
 the nonconvex problem~\eqref{eq:l2OnPosesPrimal}: 
\begin{align}
\!\!\!\min_{\MX} & \sumalledges \!\!
\oomegat\ij \|[\MX]^{Rt}_{ij}\!-\![\MX]^{Rt}_{ii}\!-\!\bart_{ij}\|_2 
\!+\! \frac{\oomegaR_{ij}}{\sqrt{2}} \|[\MX]^{RR}_{ij} \!-\!\barR_{ij}\|_F  \nonumber \\
\subject & \;\; \MX \succeq 0, \;\;\;\; [\MX]^{RR}_{ii}=\eye_2,\quad i=1,\dots,n 
\label{eq:l2OnPosesPrimal-relax}
\end{align}
\end{proposition}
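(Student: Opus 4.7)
The plan is to show that~\eqref{eq:l2OnPosesPrimal-relax} is obtained from~\eqref{eq:l2OnPosesPrimal} by a lifting followed by dropping two nonconvex restrictions (the determinant constraint and a rank constraint), so that every feasible point of the original problem maps to a feasible point of the SDP with equal objective. This immediately yields the lower-bound property that defines a relaxation.

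First I would verify that the change of variables $\MX = \MZ\tran \MZ$ indeed expresses the cost of~\eqref{eq:l2OnPosesPrimal} purely in terms of the blocks of $\MX$. By direct block computation one reads off $[\MX]^{RR}_{ij} = \MR_i\tran \MR_j$ and $[\MX]^{Rt}_{ij} = \MR_i\tran \vt_j$ from the partition in~\eqref{blockstruct}; in particular $[\MX]^{Rt}_{ii} = \MR_i\tran \vt_i$, so the translation residual satisfies $\MR_i\tran(\vt_j-\vt_i)-\bart_{ij} = [\MX]^{Rt}_{ij}-[\MX]^{Rt}_{ii}-\bart_{ij}$, and the rotation residual $\MR_i\tran \MR_j - \barR_{ij}$ coincides with $[\MX]^{RR}_{ij} - \barR_{ij}$. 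The objectives of~\eqref{eq:l2OnPosesPrimal} and~\eqref{eq:l2OnPosesPrimalX} therefore agree term-by-term along the image of the lifting.

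Next I would handle the feasible-set correspondence. On the one hand, $\MR_i\in\SOtwo$ implies $[\MX]^{RR}_{ii} = \MR_i\tran\MR_i = \eye_2$ and $\MX = \MZ\tran\MZ \succeq 0$ with $\rank{\MX}\le 2$, so any admissible pose tuple gives a feasible $\MX$ for~\eqref{eq:l2OnPosesPrimalX} at the same cost. Conversely, any $\MX\succeq 0$ with $\rank{\MX}=2$ factors as $\MZ\tran\MZ$ for some $\MZ\in\Real{2\times 3n}$ (unique up to a global $\mathrm{O}(2)$-action), and $[\MX]^{RR}_{ii}=\eye_2$ forces the corresponding blocks of $\MZ$ to be orthogonal matrices; following the explicit remark in the excerpt, the determinant constraint is dropped so that estimation takes place over the Orthogonal group rather than $\SOtwo$. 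Modulo this $\SOtwo \!\to\! \mathrm{O}(2)$ substitution, problems~\eqref{eq:l2OnPosesPrimal} and~\eqref{eq:l2OnPosesPrimalX} are equivalent.

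The final step is the relaxation itself: dropping $\rank{\MX}=2$ enlarges the feasible set of~\eqref{eq:l2OnPosesPrimalX}, yielding~\eqref{eq:l2OnPosesPrimal-relax}. Because the objective is a sum of $\ltwo$ and Frobenius norms composed with affine functions of $\MX$ it is convex, and the remaining constraints $\MX\succeq 0$ and $[\MX]^{RR}_{ii}=\eye_2$ are a linear matrix inequality and linear equalities, so the resulting problem is an SDP whose optimal value lower-bounds that of~\eqref{eq:l2OnPosesPrimal}. The only step that is not purely mechanical is the $\SOtwo$ vs.\ $\mathrm{O}(2)$ gap incurred when discarding the determinant constraint; as the excerpt indicates, this is handled by appealing to the empirical evidence of~\cite{Carlone15iros-duality3D} that the relaxation is not loosened thereby, rather than by a tightness argument in this proposition.
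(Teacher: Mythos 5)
Your proposal is correct and follows essentially the same route as the paper: lift via $\MX=\MZ\tran\MZ$, observe the cost depends only on the blocks of $\MX$, replace the factorization constraint by $\MX\succeq 0$ plus a rank condition, and drop the rank (and determinant) constraints to enlarge the feasible set, which is exactly the argument the paper gives around~\eqref{eq:l2OnPosesPrimalZX}--\eqref{eq:l2OnPosesPrimalX}. You are also right that discarding the determinant constraint only enlarges the feasible set further, so it affects tightness but not the validity of the relaxation.
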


The fact that~\eqref{eq:l2OnPosesPrimal-relax} 
is a convex relaxation~\eqref{eq:l2OnPosesPrimal} trivially follows from 
the first part of this section: 
Problem~\eqref{eq:l2OnPosesPrimal-relax} is convex and its feasible set 
contains the one of Problem~\eqref{eq:l2OnPosesPrimalX}, 
which is simply a reformulation of Problem~\eqref{eq:l2OnPosesPrimal}.

\subsubsection{Relaxation of the $\lone$-norm formulation}
The convex relaxation of Problem~\eqref{eq:l1OnPosesPrimal} 
can be derived in full analogy with the one presented in the previous section, 
by introducing the matrix $\MX$ and noting that the terms $\MR_i\tran \MR_j$, 
$\MR_i\tran \vt_i$, and $\MR_i\tran \vt_j$ can be written using $\MX$.
Therefore, we obtain the following  

\begin{proposition}[Convex Relaxation with $\lone$-norm]
\label{prop:l1OnPosesPrimal-relax}
The following semidefinite program is a convex relaxation of
 the nonconvex problem~\eqref{eq:l1OnPosesPrimal}: 
\begin{align}
\!\!\!\min_{\MX} & \sumalledges \!\!
\oomegat\ij \|[\MX]^{Rt}_{ij}\!-\![\MX]^{Rt}_{ii}\!-\!\bart_{ij}\|_1 
\!+\! \frac{\oomegaR_{ij}}{2} \|[\MX]^{RR}_{ij} \!-\!\barR_{ij}\|_1  \nonumber \\
\subject & \;\; \MX \succeq 0, \;\;\;\; [\MX]^{RR}_{ii}=\eye_2,\quad i=1,\dots,n .
\label{eq:l1OnPosesPrimal-relax}
\end{align}
\end{proposition}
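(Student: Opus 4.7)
The plan is to mirror exactly the derivation used for Proposition~\ref{prop:l2OnPosesPrimal-relax} in the preceding subsection. The key observation, already used there, is that once we introduce the matrix $\MZ$ in~\eqref{eq:MZ} and its Gram matrix $\MX = \MZ\tran\MZ$ with the block partition in~\eqref{blockstruct}, the cross-products appearing in the cost of~\eqref{eq:l1OnPosesPrimal} admit a linear expression in the entries of $\MX$: namely, $\MR_i\tran\MR_j = [\MX]^{RR}_{ij}$ and $\MR_i\tran(\vt_j-\vt_i) = [\MX]^{Rt}_{ij} - [\MX]^{Rt}_{ii}$. Substituting these identities into~\eqref{eq:l1OnPosesPrimal} produces an equivalent reformulation whose objective is linear in $\MX$ after moving inside the $\lone$ norms, and is therefore convex; the only nonconvexity is carried by the constraint $\MX = \MZ\tran\MZ$ together with the orthogonality constraints $[\MX]^{RR}_{ii} = \eye_2$ (the determinant constraint being dropped, exactly as done in the $\ltwo$-norm case).

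Next, I would invoke the same rank characterization as before: $\MX = \MZ\tran\MZ$ with $\MZ \in \Real{2\times 3n}$ is equivalent to requiring $\MX \succeq 0$ and $\rank{\MX} = 2$. This yields a problem identical in form to~\eqref{eq:l2OnPosesPrimalX} but with $\lone$ penalties in place of Euclidean and Frobenius norms. Dropping the rank constraint enlarges the feasible set, and the resulting problem is precisely~\eqref{eq:l1OnPosesPrimal-relax}. Since the $\lone$ norm is convex and the remaining constraints ($\MX \succeq 0$ and linear equalities $[\MX]^{RR}_{ii}=\eye_2$) define a convex (in fact, spectrahedral) set, the problem is a semidefinite program.

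To conclude, I would argue as in Proposition~\ref{prop:l2OnPosesPrimal-relax}: because every feasible point of the reformulated nonconvex problem is feasible for~\eqref{eq:l1OnPosesPrimal-relax} with the same objective value, the optimal value of~\eqref{eq:l1OnPosesPrimal-relax} lower-bounds that of~\eqref{eq:l1OnPosesPrimal}, which is exactly the notion of a convex relaxation being claimed.

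The derivation has no genuine obstacles; it is essentially a transcription of the previous proof with the norm changed. The only small point to be careful about is verifying that each summand in the cost of~\eqref{eq:l1OnPosesPrimal} is expressible purely in terms of entries of $\MX$ without needing to recover $\MZ$ itself, so that the reformulation in the variable $\MX$ is legitimate; this is immediate from the two identities above. Once this is checked, the relaxation statement follows by the same set-inclusion argument as in the $\ltwo$-norm case.
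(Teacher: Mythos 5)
Your proposal is correct and follows essentially the same route as the paper, which explicitly derives this relaxation ``in full analogy'' with the $\ltwo$-norm case: reparametrize via $\MX=\MZ\tran\MZ$ using the identities $\MR_i\tran\MR_j=[\MX]^{RR}_{ij}$ and $\MR_i\tran(\vt_j-\vt_i)=[\MX]^{Rt}_{ij}-[\MX]^{Rt}_{ii}$, replace the Gram constraint by $\MX\succeq 0$ plus a rank-2 condition, drop the determinant and rank constraints, and conclude by feasible-set inclusion. The only cosmetic slip is calling the objective ``linear in $\MX$'' --- it is convex (a sum of $\lone$ norms of affine functions of $\MX$) --- but this does not affect the argument.
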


\subsubsection{Relaxation of Huber formulation}
This paragraph provides a relaxation of the robust formulation~\eqref{eq:huberOnPosesPrimal}, 
using the same derivation of the previous sections.

\begin{proposition}[Convex Relaxation with Huber loss]
\label{prop:huberOnPosesPrimal-relax}
The following semidefinite program is a convex relaxation of
 the nonconvex problem~\eqref{eq:huberOnPosesPrimal}: 
\bea
\!\!\!\min_{\MX} \!\!\!& \!\!\!\sumalledges \!\!
\emph{\huber}( \oomegat\ij \|[\MX]^{Rt}_{ij}\!-\![\MX]^{Rt}_{ii}\!-\!\bart_{ij}\|_2 ) 
\!+\!
\emph{\huber}( \oomegaR_{ij} \|[\MX]^{RR}_{ij} \!-\!\barR_{ij}\|_F )  \nonumber  \!\!\! \!\!\!\!\!\! \\
\subject \!\!\!& \;\; \MX \succeq 0, \;\;\;\; [\MX]^{RR}_{ii}=\eye_2,\quad i=1,\dots,n . \!\!\!\!\!\!\!\!\!
\label{eq:huberOnPosesPrimal-relax}
\eea
\end{proposition}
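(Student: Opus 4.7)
The plan is to mirror the lifting used in the proofs of Propositions~\ref{prop:l2OnPosesPrimal-relax} and~\ref{prop:l1OnPosesPrimal-relax} and then to verify convexity of the resulting SDP objective, which is the only step where the Huber case differs substantively. First I would stack the unknown poses into the matrix $\MZ$ as in~\eqref{eq:MZ} and introduce the lifted variable $\MX = \MZ\tran \MZ$, whose block structure~\eqref{blockstruct} yields the identities $\MR_i\tran(\vt_j-\vt_i)=[\MX]^{Rt}_{ij}-[\MX]^{Rt}_{ii}$ and $\MR_i\tran\MR_j=[\MX]^{RR}_{ij}$. Substituting these into~\eqref{eq:huberOnPosesPrimal} rewrites the cost as a function of $\MX$ alone, while the orthogonality condition $\MR_i\tran\MR_i=\eye_2$ becomes the linear constraint $[\MX]^{RR}_{ii}=\eye_2$.

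Next, I would discard the determinant constraint, just as in the proof of Proposition~\ref{prop:l2OnPosesPrimal-relax}, and invoke the equivalence $\MX=\MZ\tran\MZ \iff \MX\succeq 0 \text{ and } \rank{\MX}=2$. This produces a reformulation of~\eqref{eq:huberOnPosesPrimal} whose only source of nonconvexity is the rank constraint. Dropping that constraint yields exactly~\eqref{eq:huberOnPosesPrimal-relax}, and since its feasible set contains that of the lifted reformulation, to conclude that the SDP is a valid convex relaxation it suffices to show that the objective is convex in $\MX$.

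The convexity check, essentially trivial in the earlier two propositions, is where I would do the one piece of work specific to the Huber case. Each summand has the form $\huber(\alpha\, g(\MX))$, where $\alpha>0$ is a constant weight and $g(\MX)$ is either $\|[\MX]^{Rt}_{ij}-[\MX]^{Rt}_{ii}-\bart_{ij}\|_2$ or $\|[\MX]^{RR}_{ij}-\barR_{ij}\|_F$; in both cases $g$ is a norm of an affine function of $\MX$, hence convex and nonnegative. Because $\huber(\cdot)$ defined in~\eqref{eq:huber} is convex and nondecreasing on $[0,+\infty)$, the standard composition rule (convex nondecreasing composed with convex is convex) implies that each summand is convex in $\MX$, and summing over edges preserves convexity. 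I do not anticipate any genuine obstacle: the argument is structurally identical to the $\ltwo$ and $\lone$ proofs, and the only added ingredient is this one-line composition check for $\huber$ applied to a norm.
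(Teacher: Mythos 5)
Your proposal is correct and follows essentially the same route as the paper: the same lifting to $\MX=\MZ\tran\MZ$ with the rank and determinant constraints dropped, followed by the observation that each summand is $\huber(\cdot)$ composed with a norm of an affine function of $\MX$, which is convex because $\huber$ is convex and nondecreasing on nonnegative arguments. The paper's own argument is exactly this composition-rule check (citing Boyd, p.~84), so there is nothing to add.
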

Again, the proof of the statement follows from the derivation 
we provided for the $\ltwo$-norm case. While the convexity of the 
costs in~\eqref{eq:l2OnPosesPrimal-relax} and~\eqref{eq:l1OnPosesPrimal-relax}
trivially follows from the convexity of the $\lone$ and $\ltwo$ norms, 
the convexity of the cost in~\eqref{eq:huberOnPosesPrimal-relax} is less straightforward.
To ascertain convexity of the cost in~\eqref{eq:huberOnPosesPrimal-relax} we
note that the first term in the cost has the form ``$\huber(\|\vxx\|_2)$'', and 
is the composition of (i) a convex function ($\|\vxx\|_2$) and (ii) a function ($\huber(\cdot)$) 
which is convex and nondecreasing when restricted to nonnegative arguments, see~\cite[p. 617]{Hartley00}. 
Properties (i) and (ii) guarantee that the resulting function $\huber(\|\vxx\|_2)$
is convex, see~\cite[p. 84]{Boyd04book}. 
An analogous argument holds for the second summand in~\eqref{eq:huberOnPosesPrimal-relax}.

\subsection{Rounding the Relaxed Solutions}  
\label{sec:rounding}

The solution of each of the convex relaxations~\eqref{eq:l2OnPosesPrimal-relax},~\eqref{eq:l1OnPosesPrimal-relax},
and~\eqref{eq:huberOnPosesPrimal-relax} is a matrix, that we call $\hatMX$. 
However, our goal is to estimate a set of poses. In this section we show how to 
retrieve the poses $(\MR_i,\vt_i),  i=1,\ldots,n$, from $\hatMX$.

\red{
\myParagraph{Rotation rounding}
The computation of the rotation estimates from $\hatMX$ proceeds along the same lines of~\cite{Wang13ima} 
and~\cite{Carlone16tro-duality2D}. 
Given the matrix $\hatMX$ we first compute a rank-$2$ approximation of $\hatMX$ via singular 
value decomposition. 
The resulting 
matrix $\hatMZ \in \Real{2 \times 3n}$  
is such that $\hatMZ\tran \hatMZ \approx \hatMX$ (the previous is an actual equality  
when $\hatMX$ has rank 2). Then, since the $\hatMZ$ has the structure described in~\eqref{eq:MZ}, 
we know that the first $n$ $2\times 2$ blocks of $\hatMZ$ contain our rotation estimates.
Therefore we project each block to \SOtwo, 
as prescribed in~\cite[Section 5]{Hartley13ijcv}, and obtain our rotation estimates $\hatMR_i$, $i=1,\ldots,n$.}


\red{
\myParagraph{Translation rounding}
The translation computation leverages the rotation estimates $\hatMR_i$, $i=1,\ldots,n$, 
computed in the previous section. Let us call $\hatMR \doteq [\hatMR_1 \; \ldots \; \hatMR_n]$ 
the matrix stacking all these estimates. 
Then, by inspection of the matrix $\MX$ in~\eqref{eq:MX} we realize that
 $\MR \MX^{Rt} = n [\vt_1 \; \ldots \; \vt_n]$.
Based on this observation, we build our translation estimate as:
\begin{align}
\label{eq:translations1}
\hatvt \doteq [\hatvt_1 \; \ldots \; \hatvt_n] = \frac{1}{n} \hatMR \hatMX^{Rt} .
\end{align}
}
\section{Decoupling Rotations and Translations: 2-stage Convex Relaxations} 
\label{sec:2stage}

In this section we discuss a slightly different approach to compute 
approximate solutions for the
 problems~\eqref{eq:l2OnPosesPrimal},~\eqref{eq:l1OnPosesPrimal},~\eqref{eq:huberOnPosesPrimal}.
These three problems share the fact that the corresponding cost functions
have two different terms: the first involving rotation and translations and the second involving only
rotations. Based on the empirical observation that the contribution of the first term to the 
rotation estimate is often negligible (see~\cite{Carlone14tro}), in this section 
we propose to first compute a rotation estimate by minimizing the second summand, 
and then compute the translation estimate by minimizing the first summand with given rotations.
While the decoupling of rotation and translation estimation might sound  
counterintuitive, Section~\ref{sec:8} shows that this \emph{2-stage approach} is indeed the most effective, 
leading to accurate estimate in presence of many outliers. 

\myParagraph{Stage 1: rotation estimation} 
The rotation subproblem in 
Eqs.~\eqref{eq:l2OnPosesPrimal},~\eqref{eq:l1OnPosesPrimal},~\eqref{eq:huberOnPosesPrimal}, 
assumes the following general form:
\beq
\!\!\! \min_{\MR_{i} \in \SOtwo} \sumalledges \!\!
 f_R( \MR_{i}\tran \!\MR_j \!-\!\barR_{ij} ) 
\label{eq:rotSubproblem}
\eeq
where, depending on the formulation 
(\eqref{eq:l2OnPosesPrimal},~\eqref{eq:l1OnPosesPrimal},~\eqref{eq:huberOnPosesPrimal}), 
the function $f_R(\cdot)$ denotes the $\ltwo$-norm, the $\lone$-norm, or the Huber loss, properly 
weighted by $\oomegaR_{ij}$. 
We now propose a convex relaxation which proceeds along the lines 
of Section~\ref{sec:relaxAndRound}.
First we define two matrices, $\MR \doteq [ \MR_1 \; \dots \; \MR_n] \in \Real{2\times2n}$,
 and
\beq
\label{eq:MXRR}
\MX^{RR} \doteq \MR\tran \MR
=
\left[\begin{array}{c c c} 
\MR_1\tran \MR_1 & \dots & \MR_1\tran \MR_n \\ 
 \vdots & \ddots & \vdots  \\ 
 \MR_n\tran  \MR_1 & \dots & \MR_n\tran \MR_n 
 \end{array} \right]  
 \in \Real{2n\times2n}
\eeq
Then, we repeat the same derivation of Section~\ref{sec:relaxAndRound} (but applied to the smaller matrix $\MX^{RR}$) 
and get the following convex relaxation of the rotation subproblem~\eqref{eq:rotSubproblem}:
\begin{align}
\!\!\!\min_{\MX} & \sumalledges \!\! f_R( [\MX]^{RR}_{ij} \!-\!\barR_{ij} )  \nonumber \\
\subject & \;\; \MX^{RR} \succeq 0, \;\;\;\; [\MX]^{RR}_{ii}=\eye_2,\quad i=1,\dots,n .
\label{eq:huberOnRot-relax}
\end{align}
Calling $\hatMX^{RR}$ the solution of the convex program~\eqref{eq:huberOnRot-relax},  
 we can recover the rounded rotation estimates, $\hatMR_i$, $i=1,\ldots,n$, from $\hatMX^{RR}$ as discussed in Section~\ref{sec:rounding}.
The relaxation~\eqref{eq:huberOnRot-relax} is similar to the one used by Wang and Singer in~\cite{Wang13ima}, 
except for the fact that we accommodate different robust cost functions.

\myParagraph{Stage 2: translation estimation} 
The translation subproblem  corresponds to the first summand in 
Eqs.~\eqref{eq:l2OnPosesPrimal},~\eqref{eq:l1OnPosesPrimal},~\eqref{eq:huberOnPosesPrimal},
and 
can be written in the following general form:
\beq
\!\!\! \min_{\vt_{i} \in \Real{2}}  \sumalledges \!\! f_t( \hatMR_{i}\tran\vt_j - \hatMR_{i}\tran \vt_i - \bart_{ij} ) 
\label{eq:tranSubproblem}
\eeq
where, depending on the formulation 
(\eqref{eq:l2OnPosesPrimal},~\eqref{eq:l1OnPosesPrimal},~\eqref{eq:huberOnPosesPrimal}), 
the function $f_t(\cdot)$ denotes the $\ltwo$-norm, the $\lone$-norm, or the Huber loss, properly 
weighted by $\oomegat_{ij}$. In~\eqref{eq:tranSubproblem}, we already 
substituted the rotation estimates computed in Stage 1, hence the translations are the only unknowns.

Problem~\eqref{eq:tranSubproblem} is already a convex program, since the problem is unconstrained 
and $f_t(\cdot)$ is a convex function in all the considered formulations.
Therefore, Stage 2 \red{simply consists of solving}~\eqref{eq:tranSubproblem} with an off-the-shelf
convex solver to get the translation estimates $\hatvt \doteq [\hatvt_1,\ldots, \hatvt_n]$. 
%

\section{A Posteriori Performance Guarantees} 
\label{sec:guarantees}

This section provides \emph{a posteriori} 
checks that can ascertain the quality of the relaxed solution after solving 
the convex relaxation.
We give the following result, applied to~\eqref{eq:l2OnPosesPrimal}, while the 
very same statement holds for Problems~\eqref{eq:l1OnPosesPrimal},~\eqref{eq:huberOnPosesPrimal}. 
\begin{proposition}[Tightness in robust PGO formulations]
\label{prop1}
Let $f(\MR,\vt)$ be the cost function of the robust PGO formulation~\eqref{eq:l2OnPosesPrimal}, 
with $\MR \doteq [\MR_1,\ldots,\MR_n]$ and $\vt \doteq [\vt_1,\ldots,\vt_n]$, and 
call $f^\star$ the corresponding optimal cost.
With slight abuse of notation, call $f(\MX)$ 
the cost function of the convex relaxation~\eqref{eq:l2OnPosesPrimal-relax}, 
and denote with $\hatMX$ the optimal solution of~\eqref{eq:l2OnPosesPrimal-relax}
and with $(\hatMR,\hatvt)$ the corresponding rounded estimate.
Then, it holds:
\begin{align}
\label{eq:poseSubopt}
f(\hatMR,\hatvt) - f^\star \leq f(\hatMR,\hatvt) - f(\hatMX)
\end{align}
i.e., we can compute a bound on the suboptimality gap $f(\hatMR,\hatvt) - f^\star$ 
by using the optimal cost of the relaxation $f(\hatMX)$.
Moreover, if the relaxed solution $\hatMX$ has rank 2 
and its rank-2 decomposition $\hatMZ$ is such that the first n $2\times2$ blocks of $\hatMZ$ are in \SOtwo,
then $f(\hatMR,\hatvt) = f^\star$ and the rounded solution $(\hatMR,\hatvt)$ 
is optimal for the original problem~\eqref{eq:l2OnPosesPrimal}.
\end{proposition}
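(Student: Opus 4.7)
The plan is to exploit the fact that Problem~\eqref{eq:l2OnPosesPrimal-relax} is obtained from Problem~\eqref{eq:l2OnPosesPrimalX} by dropping only the rank constraint, so the feasible set of the relaxation strictly contains the feasible set of the original (reformulated) problem, while the objective is written as a function of $\MX$ only and coincides on matching feasible points. From this, standard relaxation arguments yield both the suboptimality bound and the tightness conclusion.

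Concretely, I would proceed in four steps. First, I would verify that for any pose configuration $(\MR,\vt)$ feasible for~\eqref{eq:l2OnPosesPrimal}, the matrix $\MX = \MZ\tran\MZ$ built as in~\eqref{eq:MZ}--\eqref{eq:MX} is feasible for~\eqref{eq:l2OnPosesPrimal-relax}, and, crucially, that the cost $f(\MX)$ evaluates to the same number as $f(\MR,\vt)$; this follows by inspection of the blocks, since $[\MX]^{Rt}_{ij}-[\MX]^{Rt}_{ii}=\MR_i\tran(\vt_j-\vt_i)$ and $[\MX]^{RR}_{ij}=\MR_i\tran\MR_j$. Second, since every feasible point of the original problem induces a feasible point of the relaxation with the same cost, and $\hatMX$ is a minimizer of the relaxation, I conclude $f(\hatMX)\le f^\star$. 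Third, the claimed bound~\eqref{eq:poseSubopt} is obtained by rewriting
\begin{equation}
f(\hatMR,\hatvt) - f^\star \;=\; \bigl[f(\hatMR,\hatvt)-f(\hatMX)\bigr] \;+\; \bigl[f(\hatMX) - f^\star\bigr],
\end{equation}
and observing that the second bracket is nonpositive by step two, while the first is nonnegative since $(\hatMR,\hatvt)$ is feasible for the original problem (so $f^\star\le f(\hatMR,\hatvt)$).

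For the tightness claim, I would argue as follows. Assume $\rank{\hatMX}=2$ and that the rank-$2$ decomposition $\hatMZ$ has its first $n$ $2\times 2$ blocks in \SOtwo. Then $\hatMZ$ has exactly the structure prescribed by~\eqref{eq:MZ}, so the rounded pair $(\hatMR,\hatvt)$ returned by the rounding procedure of Section~\ref{sec:rounding} is a feasible configuration of the original problem~\eqref{eq:l2OnPosesPrimal}, and $\hatMX=\hatMZ\tran\hatMZ$ is precisely the matrix associated with $(\hatMR,\hatvt)$. By the cost-matching property established in step one, $f(\hatMR,\hatvt)=f(\hatMX)$. Combining with $f(\hatMX)\le f^\star\le f(\hatMR,\hatvt)$ forces equality throughout, proving optimality.

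The only subtle point in this plan is step one — the cost-matching identity. All other steps are a one-line reformulation or a sandwich argument. In particular, I expect the main obstacle to be cleanly relating the two translation-rounding conventions: the identity $[\MX]^{Rt}_{ij}-[\MX]^{Rt}_{ii}=\MR_i\tran(\vt_j-\vt_i)$ holds for the ``true'' $\MX=\MZ\tran\MZ$, but one has to be careful that the translation rounding formula~\eqref{eq:translations1} reproduces the same $\hatvt$ when $\hatMX$ already has rank $2$ with rotation blocks in \SOtwo, so that $f(\hatMR,\hatvt)=f(\hatMX)$ exactly (and not just approximately). This can be checked by direct substitution into~\eqref{eq:translations1}, using $\hatMR_i\tran\hatMR_i=\eye_2$, which shows that the averaging in~\eqref{eq:translations1} is consistent and recovers exactly the translations encoded in $\hatMZ$. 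Once this is verified, the rest of the proposition follows immediately from the sandwich argument above.
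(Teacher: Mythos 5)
Your proposal is correct and follows essentially the same route as the paper's proof: the relaxation property gives $f(\hatMX)\le f^\star$, which yields the bound, and the rank-2 plus \SOtwo hypothesis gives $f(\hatMX)=f(\hatMR,\hatvt)$, which combined with feasibility of $(\hatMR,\hatvt)$ sandwiches $f^\star$. Your explicit check that the translation-rounding formula~\eqref{eq:translations1} exactly recovers the translations encoded in $\hatMZ$ (via $\hatMR\hatMR\tran = n\eye_2$) is a detail the paper's proof passes over silently when it asserts that ``the rounding does not alter $\hatMZ$,'' and it is a worthwhile addition.
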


\begin{proof}
Since~\eqref{eq:l2OnPosesPrimal-relax} is a relaxation of Problem~\eqref{eq:l2OnPosesPrimal}, 
it follows that $f(\hatMX) \leq f^\star$, which implies the inequality~\eqref{eq:poseSubopt}.
 Moreover, if $\hatMX$ has rank 2, then $f(\hatMX) = f(\hatMZ\tran \hatMZ)$, where 
 $\hatMZ$ is the rank-2 decomposition of $\hatMX$. 
 If the first n $2\times2$ blocks of $\hatMZ$ are already in \SOtwo, then the rounding 
 does not alter $\hatMZ$, i.e., $\hatMZ = [\hatMR \; \hatvt]$.
 Therefore, it holds that (i) $f(\hatMX) = f(\hatMR,\hatvt) \leq f^\star$. Since $(\hatMR,\hatvt)$ 
 is feasible for~\eqref{eq:l2OnPosesPrimal}, by optimality of $f^\star$ it follows that
 (ii) $f^\star \leq f(\hatMR,\hatvt)$. Combining the inequalities (i) and (ii), 
 it follows that $f(\hatMR,\hatvt) = f^\star$, proving the optimality of $(\hatMR,\hatvt)$.
\end{proof}

Proposition~\ref{prop1} provides computational tools to quantify the suboptimality 
of the rounded solution $(\hatMR,\hatvt)$. Moreover, it gives an a posteriori condition under 
which the relaxation is tight. Tightness is attained under 2 conditions. 
The first condition is that
the rank of $\hatMX$ is 2: this guarantees that we did not lose anything 
when relaxing the rank constraint in~\eqref{eq:l2OnPosesPrimalX}.
The second condition is that $\hatMZ$ (the rank 2 decomposition of $\hatMX$)
contains rotation matrices: this guarantees that we did not lose anything 
by dropping the determinant constraint in~\eqref{eq:l2OnPosesPrimalX}.
As mentioned in Section~\ref{sec:relaxAndRound}, indeed empirical evidence suggests  
that the determinant constraint 
does not impact the results, hence we are mainly interested in the rank of $\hatMX$.

An analogous result applies to the rotation estimation of the 2-stage approaches, 
as discussed below (the proof is identical to the one of Proposition~\ref{prop1}).

\begin{proposition}[Tightness in 2-stage formulations]
\label{prop2}
Let $f_R(\MR)$ be the cost function of the rotation subproblem in~\eqref{eq:rotSubproblem}, 
with $\MR \doteq [\MR_1,\ldots,\MR_n]$, and 
call $f_R^\star$ the corresponding optimal cost.
With slight abuse of notation, call $f_R(\MX^{RR})$ 
the cost function of the convex relaxation~\eqref{eq:huberOnRot-relax}, 
and denote with $\hatMX^{RR}$ the optimal solution of~\eqref{eq:huberOnRot-relax}
and with $\hatMR$ the corresponding rounded estimate.
Then, it holds that
\begin{align}
\label{eq:rotSubopt}
f_R(\hatMR) - f_R^\star \leq f_R(\hatMR) - f_R(\hatMX^{RR})
\end{align}
i.e., we can compute a bound on the suboptimality gap $f_R(\hatMR) - f_R^\star$ 
by using the optimal cost of the relaxation $f_R(\hatMX^{RR})$.
Moreover, if the relaxed solution $\hatMX^{RR}$ has rank 2 
and its rank-2 decomposition $\hatMZ$ is such that the first n $2\times2$ blocks of $\hatMZ$ are in \SOtwo,
then $f_R(\hatMR) = f_R^\star$ and the rounded solution $\hatMR$ 
is optimal for the rotation subproblem in~\eqref{eq:rotSubproblem}.
\end{proposition}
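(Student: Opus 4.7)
The plan is to mirror the argument used for Proposition~\ref{prop1}, since the structure of the rotation-only problem~\eqref{eq:rotSubproblem} and its relaxation~\eqref{eq:huberOnRot-relax} parallels the one of~\eqref{eq:l2OnPosesPrimal} and~\eqref{eq:l2OnPosesPrimal-relax}, with $\MX^{RR}$ playing the role of $\MX$ and with translations dropped. The whole argument rests on two elementary facts: (a)~\eqref{eq:huberOnRot-relax} is a bona fide relaxation of~\eqref{eq:rotSubproblem}, and (b) any rank-$2$ positive semidefinite $\hatMX^{RR}$ whose rank-$2$ factor is blockwise in \SOtwo produces a feasible point of~\eqref{eq:rotSubproblem} without incurring any additional rounding error.

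First I would verify that the feasible set of~\eqref{eq:huberOnRot-relax} contains the lift of every $\MR$ feasible in~\eqref{eq:rotSubproblem}: given $\MR\doteq[\MR_1,\ldots,\MR_n]$ with $\MR_i\in\SOtwo$, setting $\MX^{RR}=\MR\tran\MR$ yields $\MX^{RR}\succeq 0$ and $[\MX]^{RR}_{ii}=\MR_i\tran\MR_i=\eye_2$, so feasibility in~\eqref{eq:huberOnRot-relax} holds; moreover the relaxed cost $f_R(\MX^{RR})$ coincides with $f_R(\MR)$ because each block $[\MX]^{RR}_{ij}=\MR_i\tran\MR_j$. Minimizing over the larger set therefore gives $f_R(\hatMX^{RR})\leq f_R^\star$, and rearranging yields the bound~\eqref{eq:rotSubopt}.

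For the tightness claim, I would unpack the two hypotheses in order. If $\hatMX^{RR}$ has rank $2$, there exists a factorization $\hatMX^{RR}=\hatMZ\tran\hatMZ$ with $\hatMZ\in\Real{2\times 2n}$, and by construction $f_R(\hatMX^{RR})=f_R(\hatMZ\tran\hatMZ)$. If in addition the $n$ $2\times 2$ blocks of $\hatMZ$ already lie in \SOtwo, the rounding step of Section~\ref{sec:rounding} is the identity map, so $\hatMZ=\hatMR$ and $f_R(\hatMX^{RR})=f_R(\hatMR)$. Combining this with the inequality $f_R(\hatMX^{RR})\leq f_R^\star$ from the previous paragraph gives $f_R(\hatMR)\leq f_R^\star$; since $\hatMR$ is feasible for~\eqref{eq:rotSubproblem}, optimality of $f_R^\star$ forces $f_R^\star\leq f_R(\hatMR)$, and the two inequalities together yield $f_R(\hatMR)=f_R^\star$, which is the asserted tightness.

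There is no real obstacle here: the argument is a near-verbatim transcription of the proof of Proposition~\ref{prop1}. The only subtlety worth checking, but nothing more than a remark, is that the determinant constraint has again been dropped in~\eqref{eq:huberOnRot-relax}, so strictly speaking the second hypothesis (``blocks are in \SOtwo'') is what rules out the reflection case and ensures that rounding preserves the objective; this mirrors the comment made right after Proposition~\ref{prop1} and needs no separate treatment.
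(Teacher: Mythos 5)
Your proof is correct and follows essentially the same route as the paper, which simply notes that the argument for Proposition~\ref{prop2} is identical to that of Proposition~\ref{prop1}: the relaxation property gives $f_R(\hatMX^{RR})\leq f_R^\star$, and the rank-2 plus \SOtwo hypotheses make rounding lossless, so the two inequalities combine to give optimality. The extra detail you supply (explicitly checking that the lift of any feasible $\MR$ is feasible for the relaxation with the same cost) is a welcome but inessential elaboration of the same argument.
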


Note that we only discuss the tightness in the relaxation of the 
rotation estimation problem (Stage 1), while translation estimation (Stage 2) 
is already convex.



\section{Numerical Evaluation}
\label{sec:8}

This section presents extensive numerical simulations and 
provides empirical evidence that 
(i) the {\itshape 1-stage} convex relaxations of Section~\ref{sec:relaxAndRound} are not tight, 
while the relaxations of the rotation subproblems in Section~\ref{sec:2stage} are indeed tight 
in the tested scenarios; 
(ii) the {\itshape 2-stage} approaches are robust with respect to increasing probability of outlying measurements
 and ensure excellent performance in practice; 
(iii) the {\itshape 2-stage} approaches allows a reliable detection of the outliers;
\red{(iv) the {\itshape 2-stage} approaches outperform state-of-the-art techniques based on local optimization}, 
such as \emph{Dynamic Covariance Scaling}~\cite{Agarwal13icra}.



\subsection{\red{Summary of the Proposed Approaches}}

\red{In the experimental evaluation we compare the 6 proposed approaches for solving the PGO with outliers:}
\begin{enumerate}
\item $\ell_1$ {\itshape on poses}: solution of Problem~\eqref{eq:l1OnPosesPrimal-relax};
\item $\ell_1$ {\itshape 2-stage}: solution of Problems~\eqref{eq:huberOnRot-relax} and \eqref{eq:tranSubproblem} 
with $f_R(\cdot)=\|\cdot\|_1$ and $f_t(\cdot)= \|\cdot\|_1$;
\item $\ell_2$ {\itshape on poses}: solution of Problem~\eqref{eq:l2OnPosesPrimal-relax};
\item $\ell_2$ {\itshape 2-stage}: solution of Problems~\eqref{eq:huberOnRot-relax} and \eqref{eq:tranSubproblem} with $f_R(\cdot)= \|\cdot\|_F$ and $f_t(\cdot)= \|\cdot\|_2$;
\item {\itshape Huber on poses}: solution of Problem~\eqref{eq:huberOnPosesPrimal-relax};
\item {\itshape Huber 2-stage}:  solution of Problems~\eqref{eq:huberOnRot-relax} and \eqref{eq:tranSubproblem} with 
$f_R(\cdot) = \huber(\|\cdot\|_F)$ and $f_t(\cdot) = \huber(\|\cdot\|_2)$. 
\end{enumerate}

All problems above are convex and
can be solved globally by off-the-shelf SDP solvers. We use \cvx \cite{CVXwebsite} 
as convex solver. 
After solving the convex relaxation, we apply the \emph{rounding procedure}, in order to obtain a feasible solution of the original problems.
We compute statistics over $30$ runs.

\subsection{Monte Carlo Analysis and Simulation Setup}
\label{sec:setup}

We performed a Monte Carlo analysis on synthetic datasets consisting of randomly generated pose graphs. 
Ground truth positions of the $n$ nodes are drawn from a uniform distribution 
in a square;
ground truth
 orientations are drawn from a uniform distribution over $\mpipi$. Connections among nodes are generated according to (i) the {\itshape Erd\H os-R\'enyi} random graph model, where each edge is included in the graph with probability $p$ independently from every other edge 
 (in our tests we set $p=0.5$); and (ii) the {\itshape Geometric} random graph model, where all nodes closer than a specified distance/radius (in our tests, we used as default value $\frac{\Delta}{4}$ where $\Delta$ is the size of the environment) are connected by an edge.  
 \red{Examples of random graphs are provided in Fig.~\ref{fig:graphs}(a-b).}
Since we 
are interested in connected graphs, random graphs that contain disconnected components 
 are discarded.
Once the graph is generated, the relative measurements 
$(\cht\ij, \chR_{ij})$ associated to each edge $(i,j) \in \calE$ are generated as:
\begin{align}
\label{eq:modelOutTran}
\cht\ij &=(1-\delta_{ij})\bart\ij+\delta_{ij}\tilt\ij, \\
\chR\ij &=(1-\delta_{ij})\barR\ij+\delta_{ij}\tilR\ij \nonumber
\end{align}
where $\delta_{ij}$ are i.i.d. {\itshape Bernoulli} random variables 
with parameter $p^{out}$, and are 
such that with probability $p^{out}$, 
$\delta_{ij}=1$ and the measurement $(\cht\ij, \chR_{ij})$ is assigned an outlier measurement $(\tilR\ij,\tilt\ij)$, 
while if $\delta_{ij}=0$ the measurement $(\cht\ij, \chR_{ij})$ is assigned 
an  inlier but noisy measurement $(\barR\ij,\bart\ij)$. 
Inliers and outliers models are as follows:
 the inliers $(\bart\ij, \barR\ij)$ are {\itshape noisy measurements} generated according to model 
\eqref{eq:noisyModel} with $\noiset\ij \sim \gaussian(\mathbf{0}_2, \sigma^2_{T})$ 
and $\noiseR_{ij} = \MR(\noiseang\ij)$, $\epsilon^{R}_{ij} \sim \gaussian(0, \sigma^2_{R})$,
where $\MR(\noiseang\ij)$ is the planar rotation matrix of angle $\noiseang\ij$.
For our tests we set $\sigma_R = 0.01$ and $\sigma_T = 0.1$, which are good proxies for the 
noise levels found in practice.
The outliers $(\tilt\ij, \tilR\ij)$ are completely wrong measurements, and are obtained from the
model \eqref{eq:noisyModel} by adding large uniformly distributed noise: $\noiset\ij \sim \uniform(-\frac{\Delta}{4}, 
\frac{\Delta}{4})$
(where $\Delta$ is the size of the environment), and $\noiseR_{ij} = \MR(\noiseang\ij)$, $\epsilon^{R}_{ij} \sim \uniform(-\pi,+\pi)$. 
Since the general purpose solver in \cvx does not scale to large instances, we focus on relatively 
small graphs, with $n=20$ and $n=50$.


  \begin{figure}[h]
\begin{minipage}{\columnwidth}
\begin{tabular}{cc}
\hspace{-5 mm}
\begin{minipage}{0.5\columnwidth}%
\includegraphics[scale=0.37, trim=0cm 0cm 0cm 0.5cm,clip]{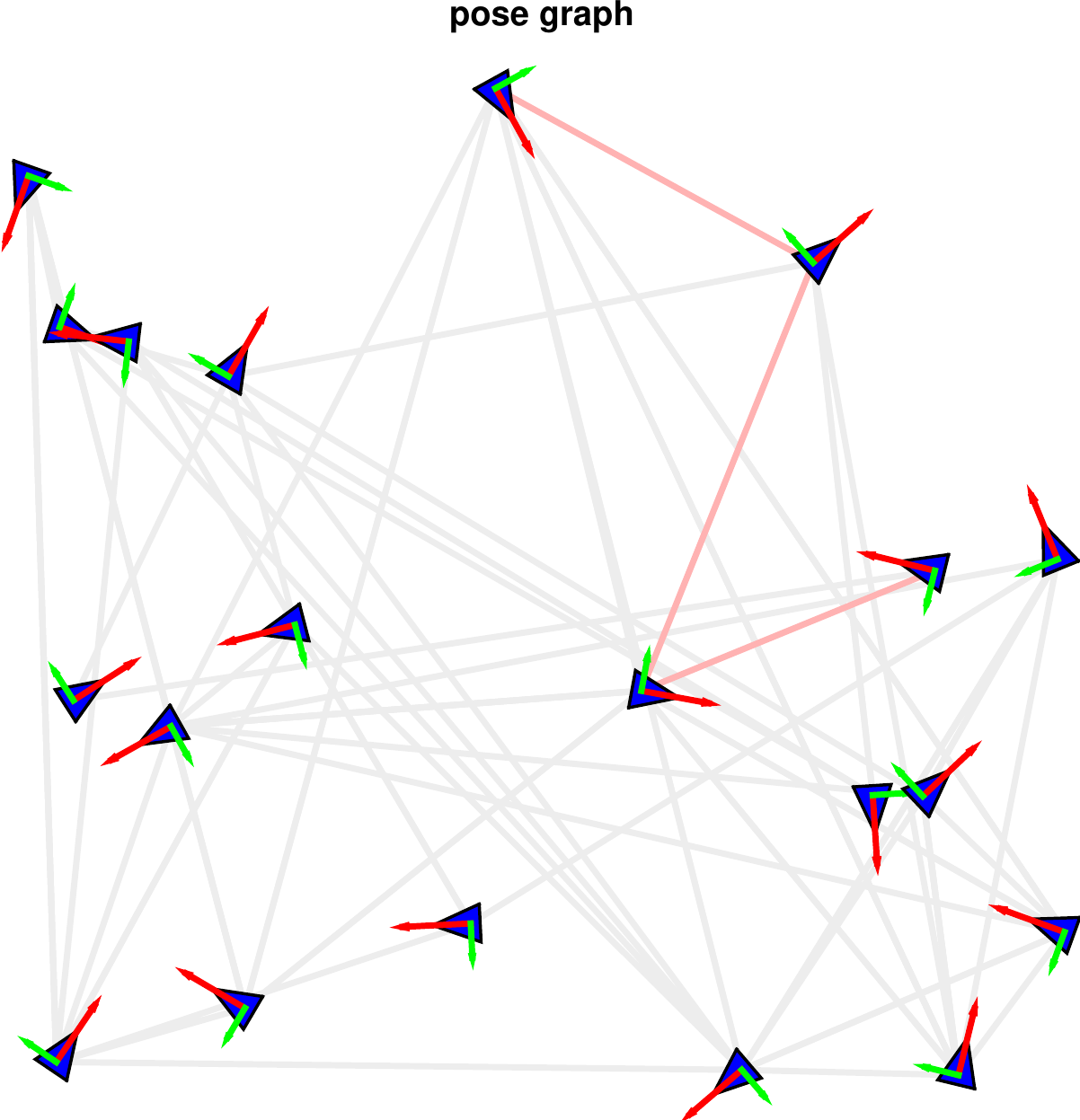}  \\ 
\centering (a) 
\end{minipage}%
&
\hspace{ -3mm}
\begin{minipage}{0.5\columnwidth}%
\centering
\includegraphics[scale=0.37]{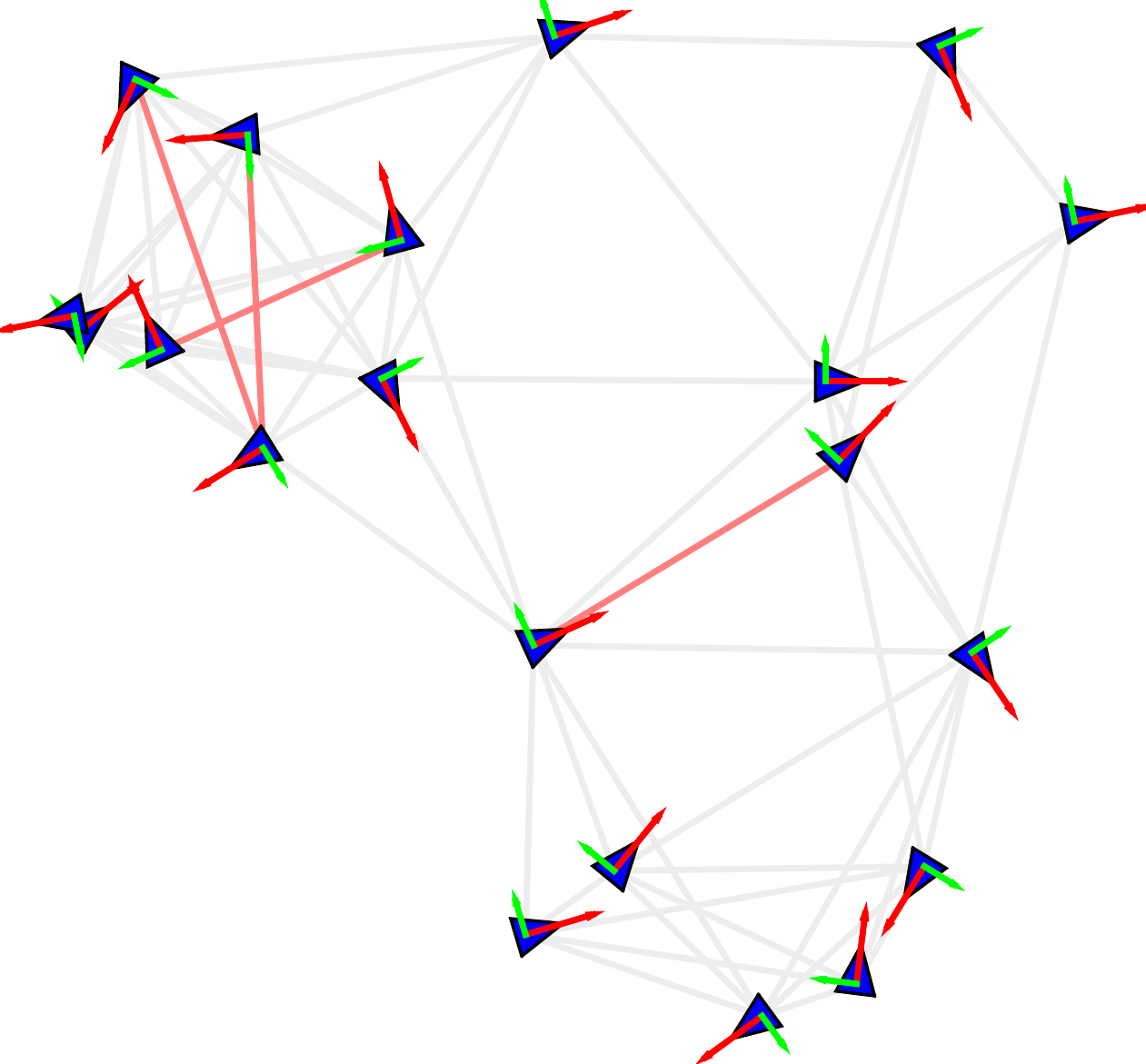}  \\ 
\centering (b) 
\end{minipage}%
\end{tabular} 
\end{minipage}%
\\
\vspace{1 mm}
\begin{tabular}{cc}
\hspace{-5 mm}
\begin{minipage}{1\columnwidth}%
\centering
\includegraphics[scale=0.5, angle =90]{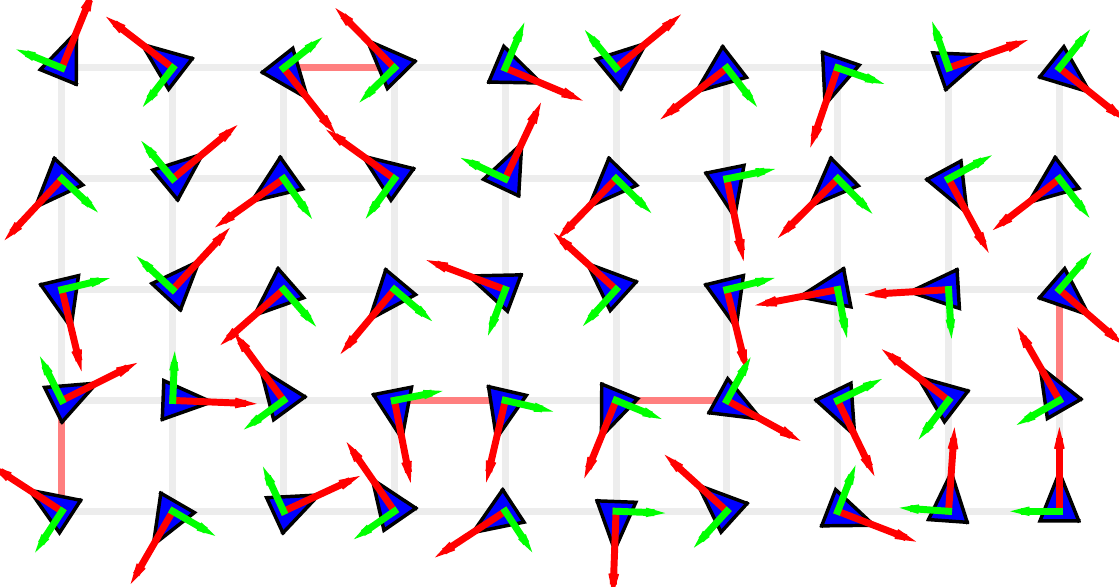}  \\ 
\centering (c) 
\end{minipage}%
\end{tabular} 
\red{\caption{Examples of simulated pose graphs: (a) Erd\H os-R\'enyi random graph; (b) Geometric random graph; 
(c) Grid graph.} 
\vspace{-5 mm}
\label{fig:graphs} }
\end{figure}


\subsection{Tightness of convex relaxations}
In this section we evaluate the tightness of the convex relaxations, i.e., their 
capability of producing accurate solutions for the original (nonconvex) problem.
In order to check if the convex relaxations in {\itshape 1-stage} approaches are tight, we check
 the rank of the solution matrices $\hatMX$ of problems \eqref{eq:l2OnPosesPrimal-relax}, \eqref{eq:l1OnPosesPrimal-relax} and \eqref{eq:huberOnPosesPrimal-relax} (see Proposition~\ref{prop1}).
 In all instances we tested, 
the rank of matrix $\hatMX$ is very high, even in absence of outliers ($p^{out}=0$). 
The solution matrices of problem \eqref{eq:l1OnPosesPrimal-relax} and \eqref{eq:huberOnPosesPrimal-relax} 
have rank equal to $n$, while the rank of the solution matrix $\hatMX$ of \eqref{eq:l2OnPosesPrimal-relax} 
is around $10$ and $25$ for instances having respectively $20$ and $50$ nodes. 
This empirical evidence shows that the SDP relaxations \eqref{eq:l2OnPosesPrimal-relax}, \eqref{eq:l1OnPosesPrimal-relax} and \eqref{eq:huberOnPosesPrimal-relax} are not tight in presence of noise and outlying measurements. 
 


On the other hand, in {\itshape 2-stage approaches} the relaxations of the rotation subproblem 
(i.e. problem~\eqref{eq:huberOnRot-relax}) are practically always tight. We computed the {\itshape stable rank} of 
the matrix $\hatMX^{RR}$, i.e. the squared ratio between the Frobenius norm and the spectral norm.
The stable rank is a real number rather than an integer, and
 provides a more detailed picture of the rank of the matrix, without requiring to 
commit to a given numerical tolerance for the rank computation.
This allows understanding 
how far is $\hatMX^{RR}$ from being a rank-2 matrix.
Fig.~\ref{fig:rankGRR}(a) shows that the stable rank of $\hatMX^{RR}$ is very close to 2 for all considered values of $p^{out}$ and for $n=20$. 
For the 2-stage approaches, this confirms the tightness of our convex relaxations.
Interestingly, also if the 1-stage approaches, even if the matrix $\hatMX$ has large rank, 
the submatrix $\hatMX^{RR}$ has rank close to 2. This would suggest that the 
presence of the translations in the 1-stage formulations breaks the tightness. 
These results are further confirmed by Fig.~\ref{fig:rankGRR}(b), which shows results for 
larger pose 
graphs with $n=50$ poses.

\begin{figure}[h]
\begin{minipage}{\columnwidth}
\begin{tabular}{cc}
\hspace{-5 mm}
\begin{minipage}{0.5\columnwidth}%
\includegraphics[scale=0.25, trim = 0mm 0mm 0mm 10mm, clip]{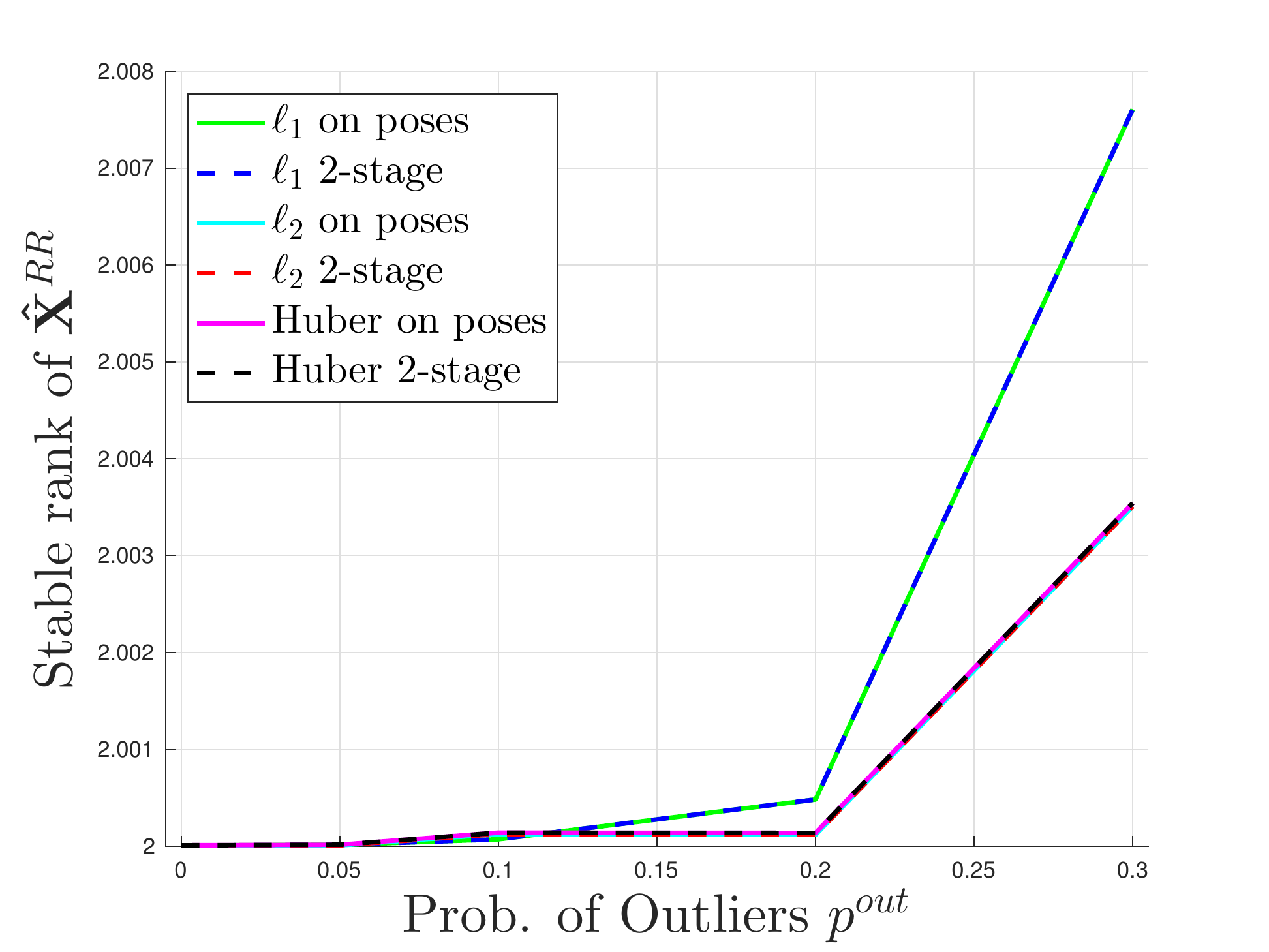}  \\ 
\centering (a) $n=20$ 
\end{minipage}
&
\hspace{-3 mm}
\begin{minipage}{0.5\columnwidth}%
\includegraphics[scale=0.25, trim = 0mm 0mm 0mm 10mm, clip]{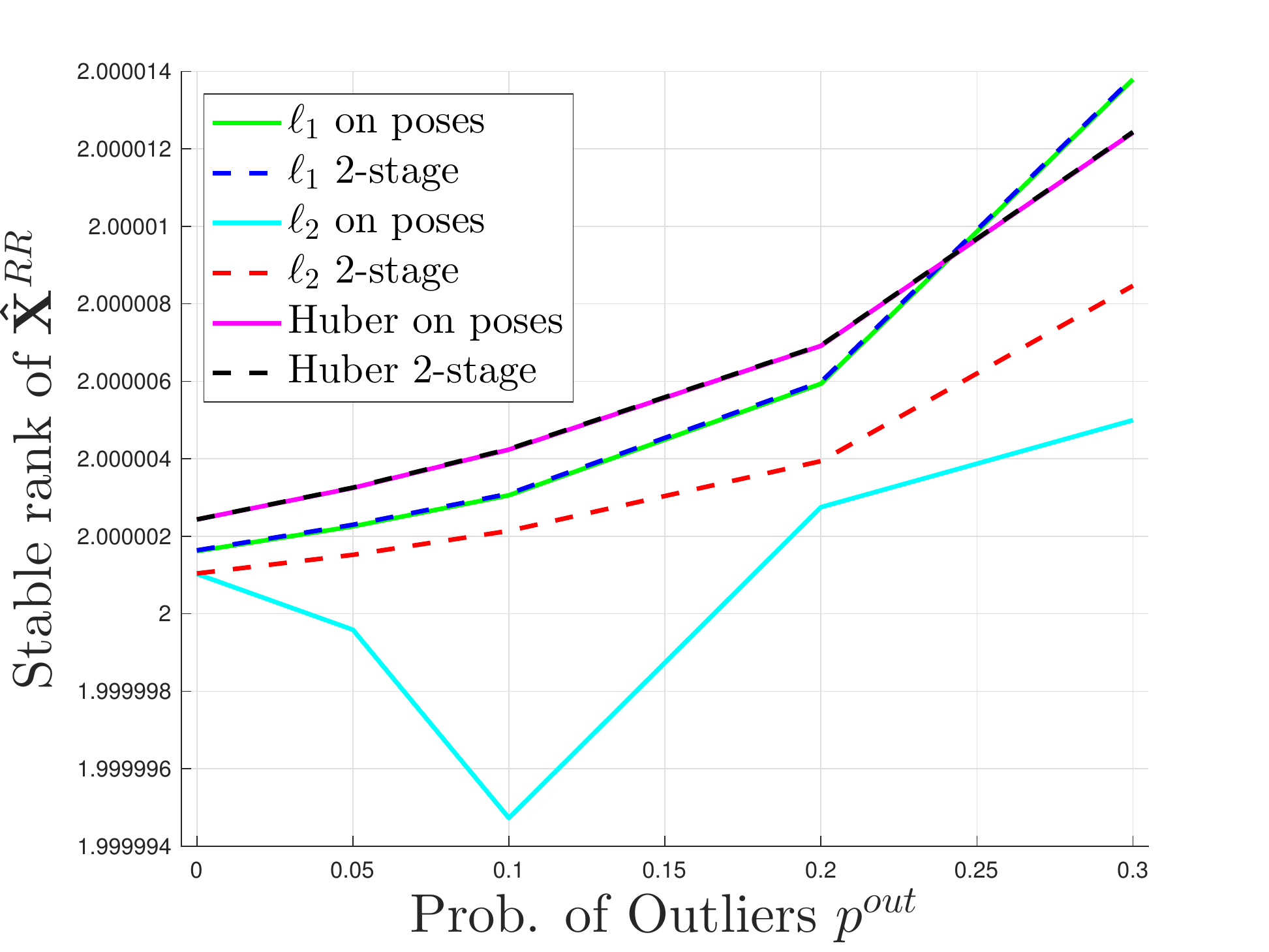}  \\ 
\centering (b) $n=50$ 
\end{minipage}%
\end{tabular}
\end{minipage}
\vspace{-2mm}
\caption{Stable rank of the matrix $\hatMX^{RR}$ computed by the different approaches and for increasing probability of outliers. 
\label{fig:rankGRR} }
\vspace{-5mm}
\end{figure}


\subsection{Robust \PGO}
\label{sec:robustPGO}

In this section we evaluate, 
for increasing levels of outliers $p^{out}$,
 the quality of the pose estimates produced by the proposed approaches.
The estimation quality is quantified by the mean rotation and mean translation error, computed with 
  respect to the ground truth. Fig.~\ref{fig:aveDistFromGT_small} shows the mean errors, averaged on $30$ runs.
  We note that there is a huge divide between the two set of approaches. 
  On the one hand, the 1-stage approaches show no robustness towards the presence of outliers, since mean translation and rotation errors quickly increase with $p^{out}$. 
On the other hand, 
  the estimates provided by the 2-stage approaches have small errors and are remarkably insensitive to the increase of the probability of outliers (y-axis is in log scale). 
  In particular, {\itshape Huber 2-stage} ensures top performance, followed by $\ell_2$ {\itshape 2-stage} and $\ell_1$ {\itshape 2-stage}.
  
\newcommand{\myscaleo}{0.26}
\newcommand{\myhspace}{-3.5mm}
\newcommand{\myhspaceo}{-3mm}

\begin{figure}[h]
\begin{minipage}{\columnwidth}
\hspace{\myhspace}
\begin{tabular}{cc}
\hspace{-5 mm}
\begin{minipage}{0.5\columnwidth}%
\includegraphics[scale=\myscaleo]{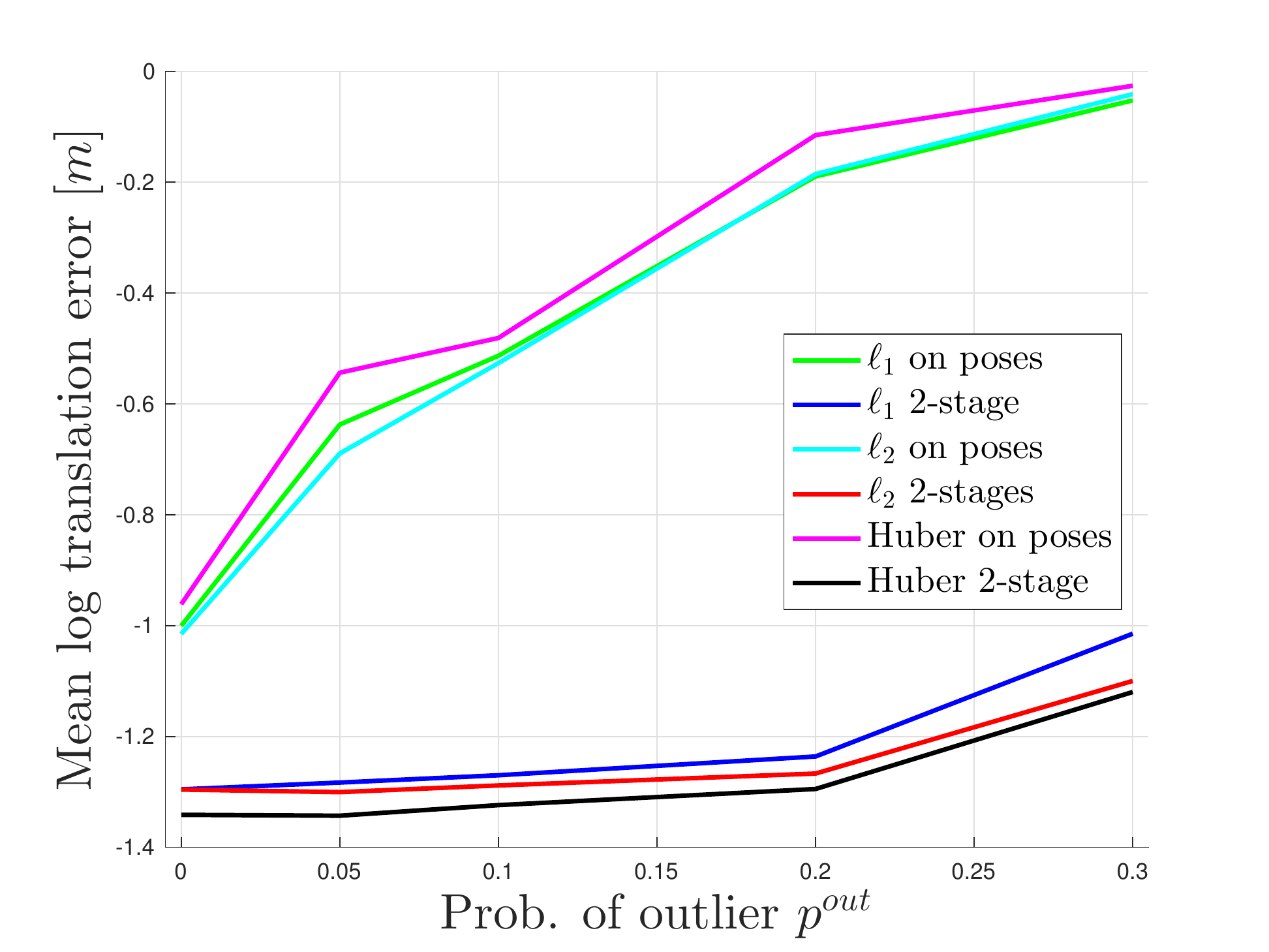}  \\ 
\end{minipage}%
&
\hspace{\myhspaceo}
\begin{minipage}{0.5\columnwidth}%
\centering
\vspace{-3.5mm}
\includegraphics[scale=\myscaleo,trim={0.5cm 0 0 0},clip]{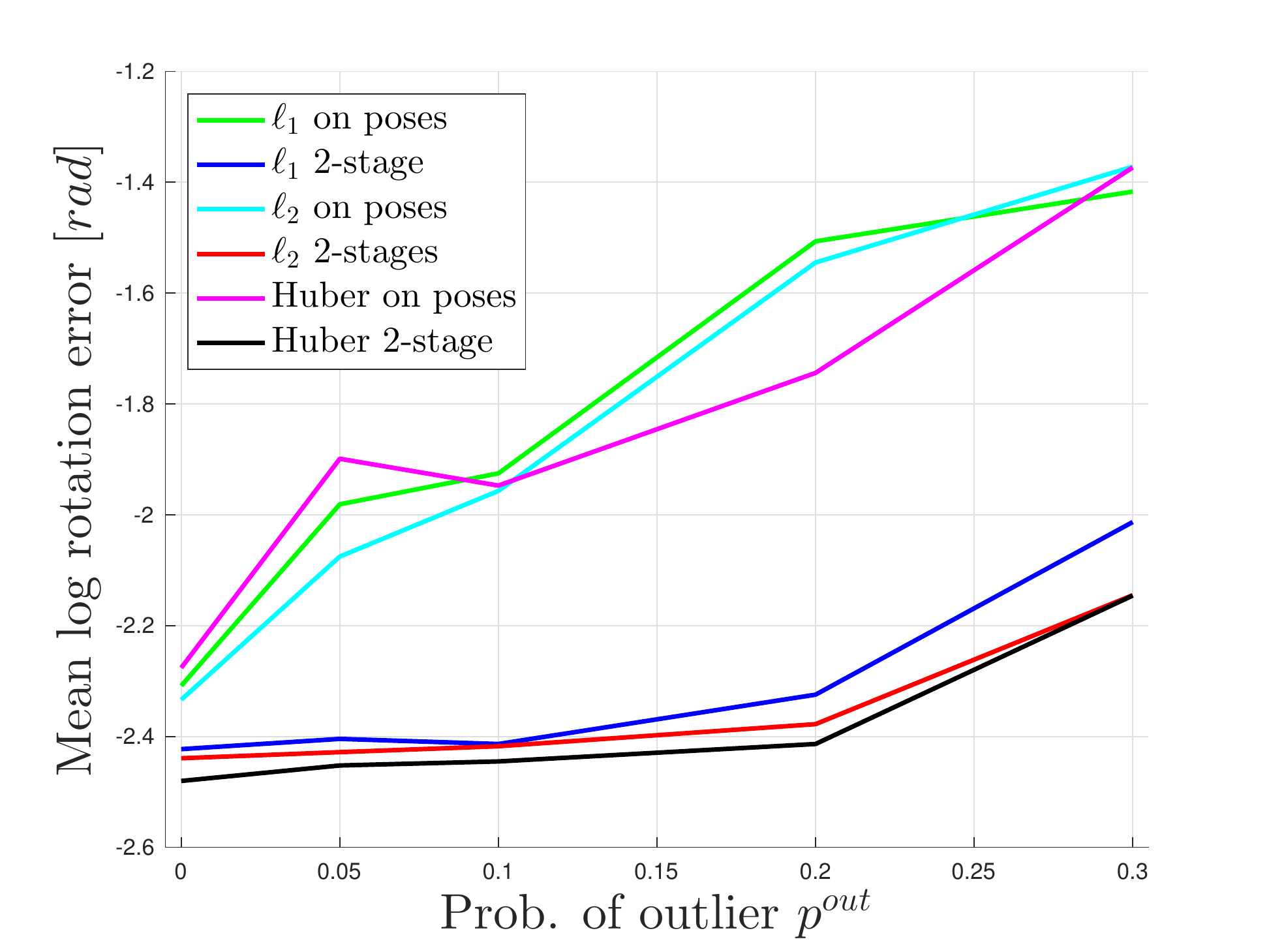}  \\ 
\end{minipage}%
\end{tabular} 
\vspace{-0.5cm} \\
\centering
(a) {\footnotesize  {\itshape Erd\H os-R\'enyi}, $n=20$.}
\end{minipage}%
\\
\begin{minipage}{\columnwidth}
\hspace{\myhspace}
\begin{tabular}{cc}
\hspace{-5 mm}
\begin{minipage}{0.5\columnwidth}%
\includegraphics[scale=\myscaleo]{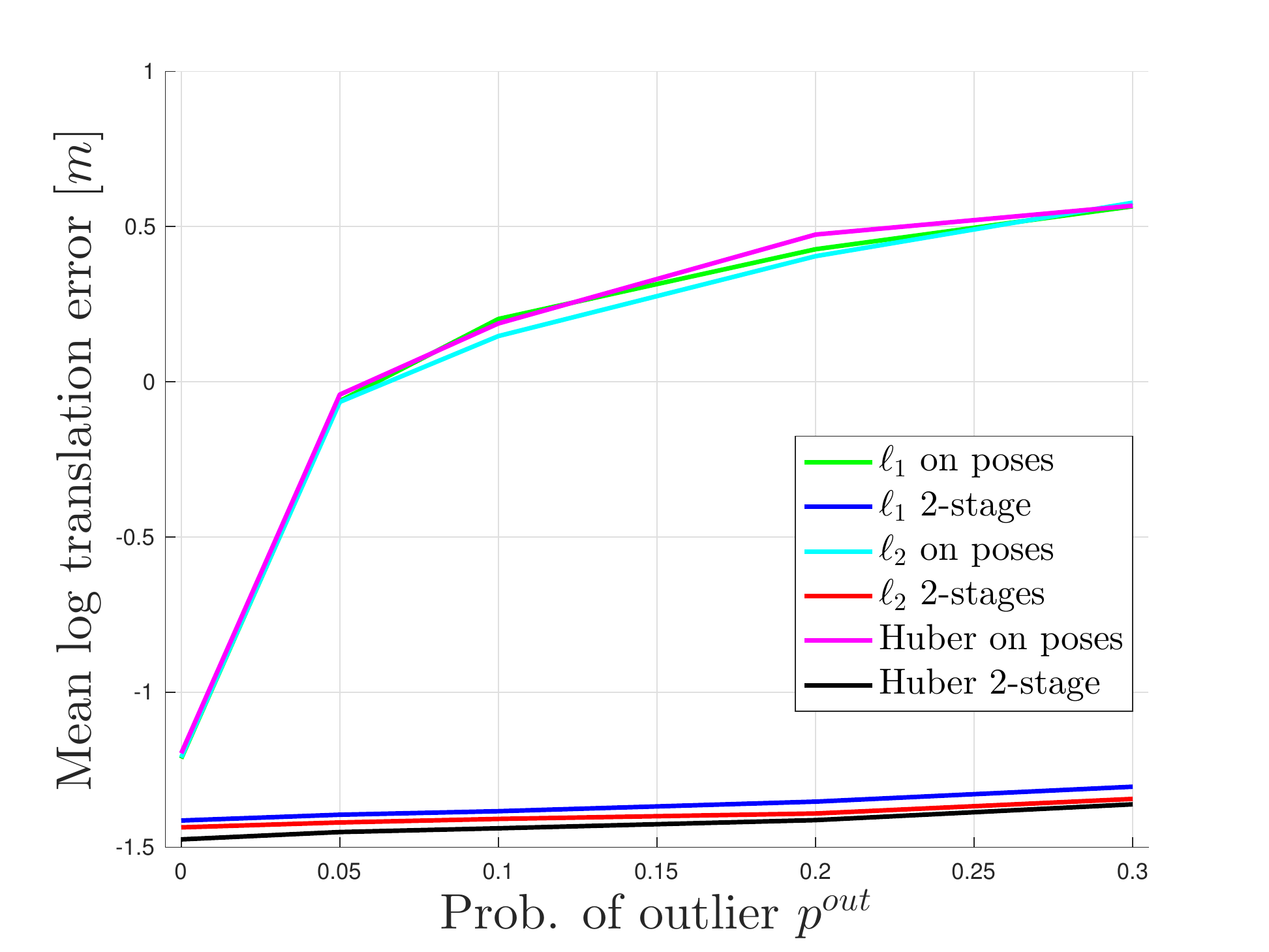}  \\ 
\end{minipage}%
&
\hspace{\myhspaceo}
\begin{minipage}{0.5\columnwidth}%
\centering
\vspace{-3.5mm}
\includegraphics[scale=\myscaleo,trim={0.5cm 0 0 0},clip]{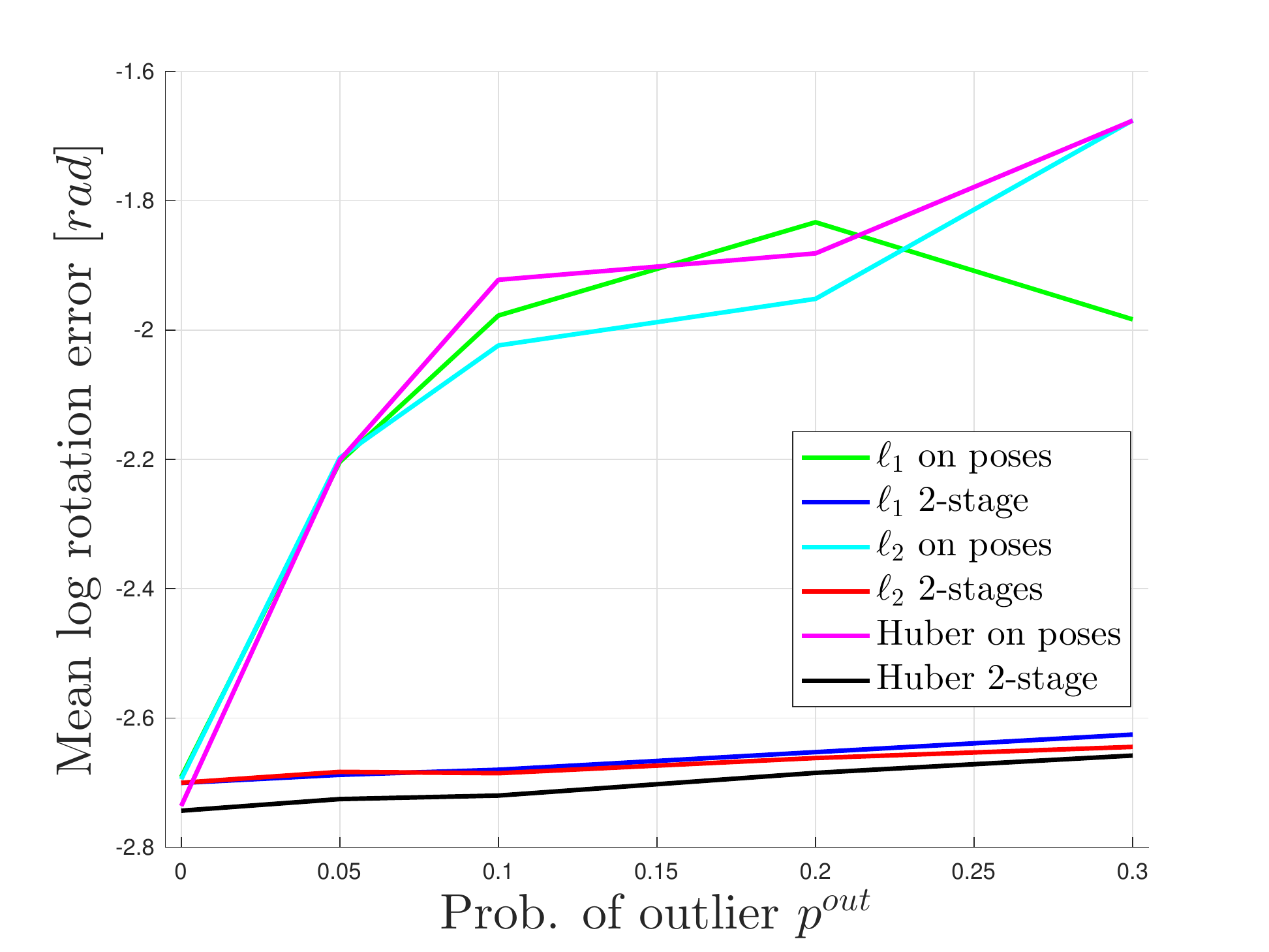}  \\ 
\end{minipage}%
\end{tabular}
\vspace{-0.5cm} \\
\centering
(b) {\footnotesize  {\itshape Erd\H os-R\'enyi}, $n=50$.} 
\end{minipage}%
\\
\begin{minipage}{\columnwidth}
\hspace{\myhspace}
\begin{tabular}{cc}
\hspace{-5 mm}
\begin{minipage}{0.5\columnwidth}%
\includegraphics[scale=\myscaleo]{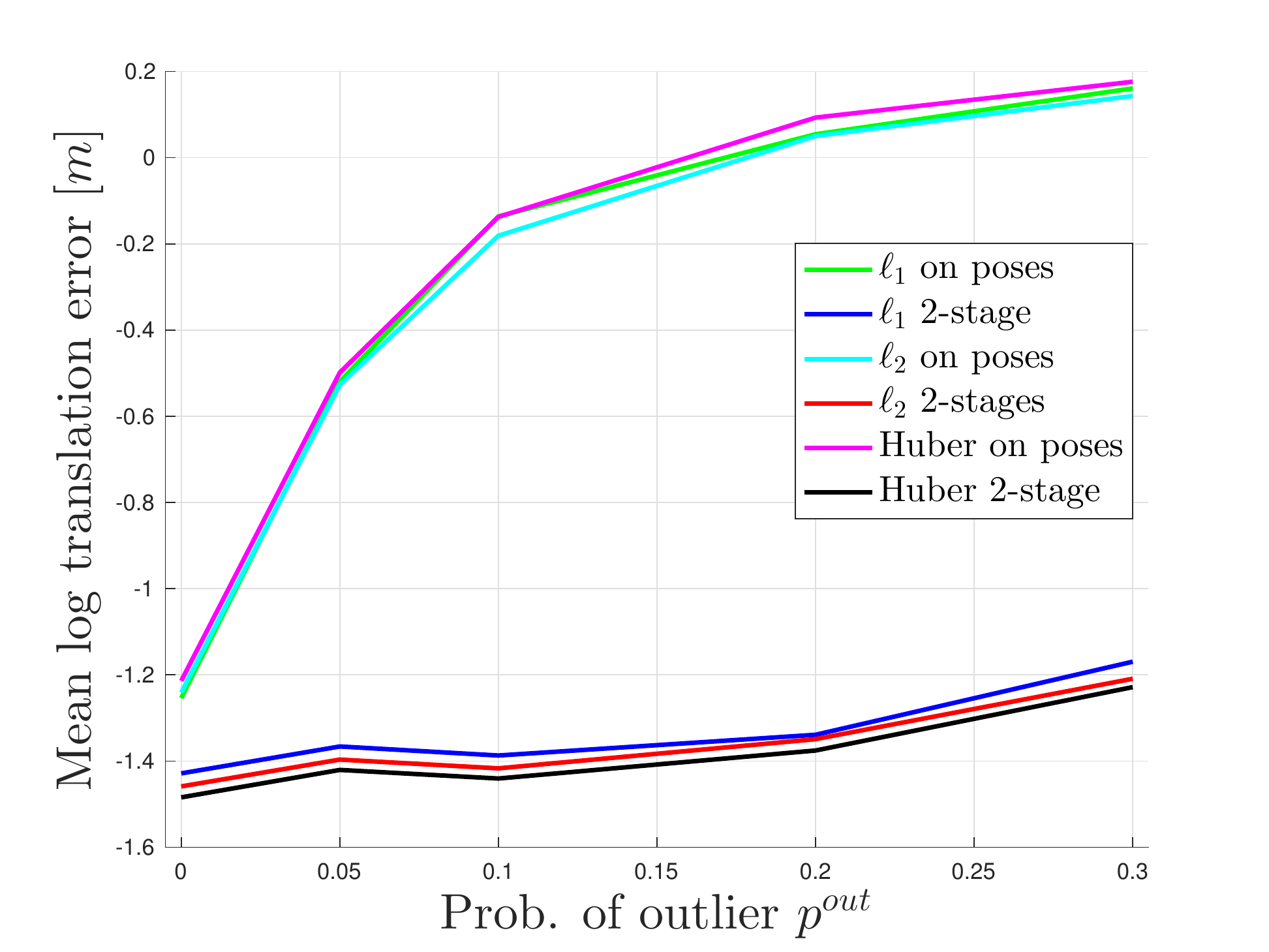}  \\ 
\end{minipage}%
&
\hspace{\myhspaceo}
\begin{minipage}{0.5\columnwidth}%
\centering
\vspace{-3.5mm}
\includegraphics[scale=\myscaleo,trim={0.5cm 0 0 0},clip]{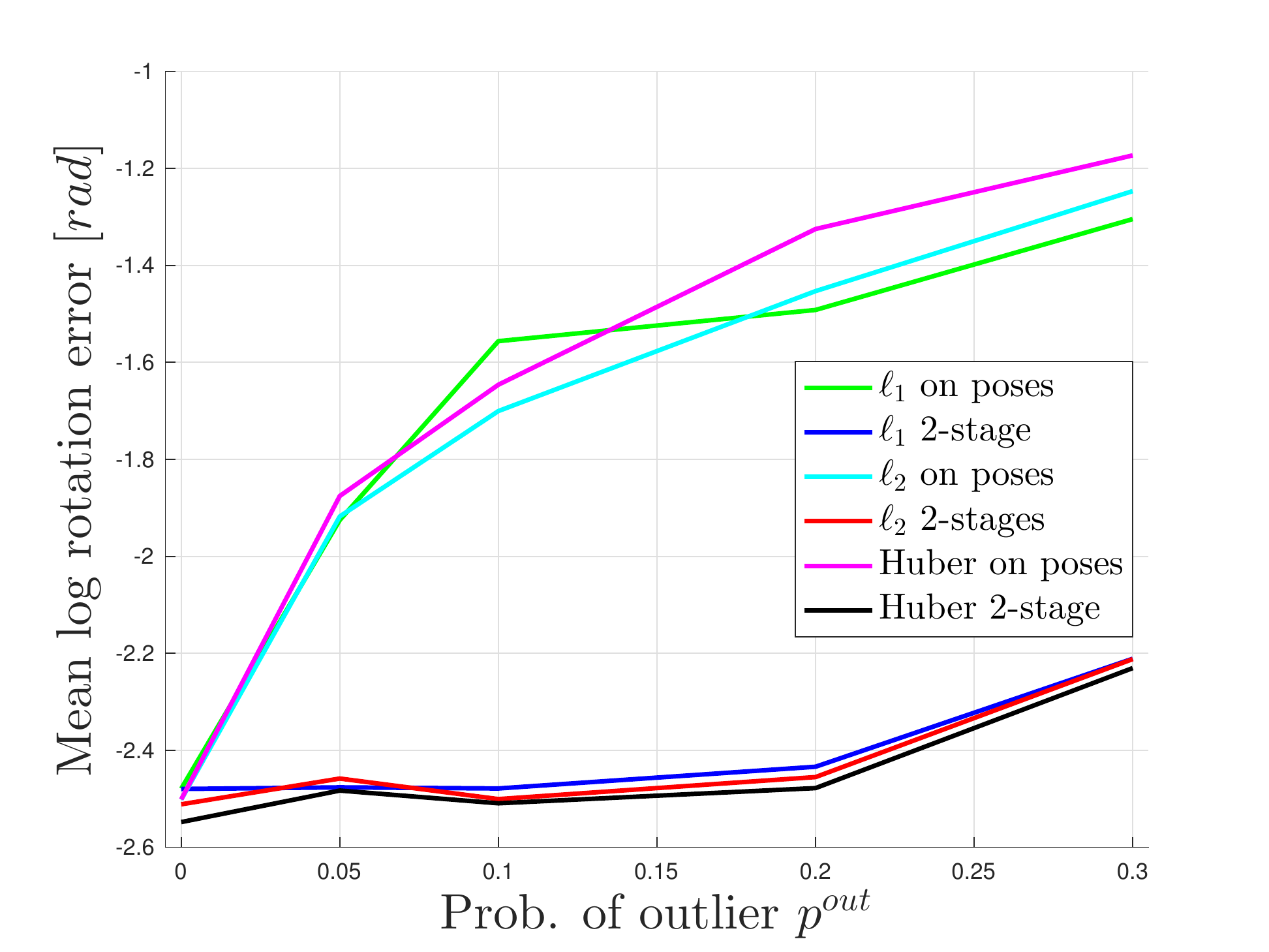}  \\ 
\end{minipage}%
\end{tabular} 
\vspace{-0.5cm} \\
\centering
(c) {\footnotesize {\itshape Geometric}, $n=20$.}
\end{minipage}%
\\
\begin{minipage}{\columnwidth}
\hspace{\myhspace}
\begin{tabular}{cc}
\hspace{-5 mm}
\begin{minipage}{0.5\columnwidth}%
\includegraphics[scale=\myscaleo]{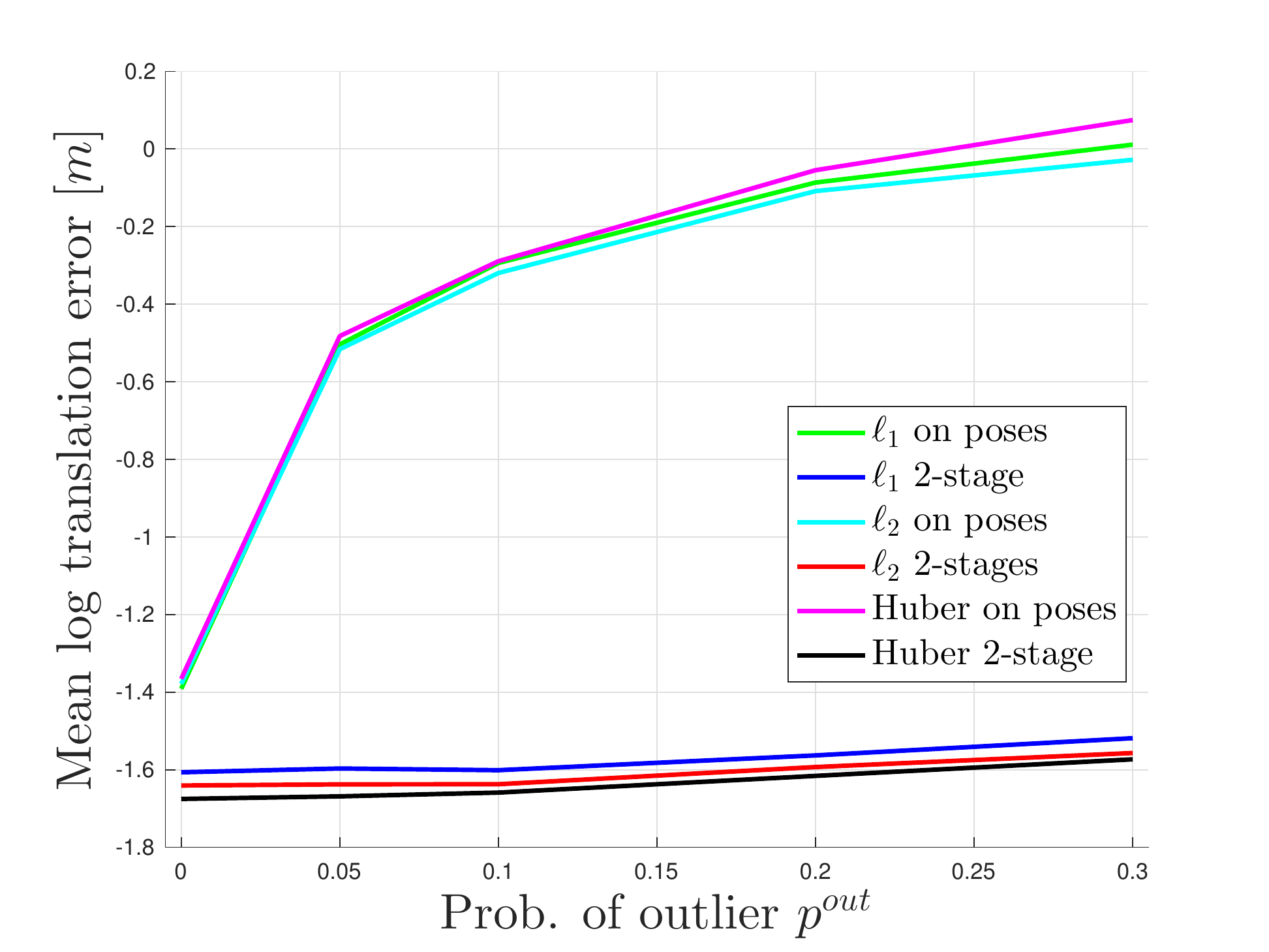}  \\ 
\end{minipage}%
&
\hspace{\myhspaceo}
\begin{minipage}{0.5\columnwidth}%
\centering
\vspace{-3.5mm}
\includegraphics[scale=\myscaleo,trim={0.5cm 0 0 0},clip]{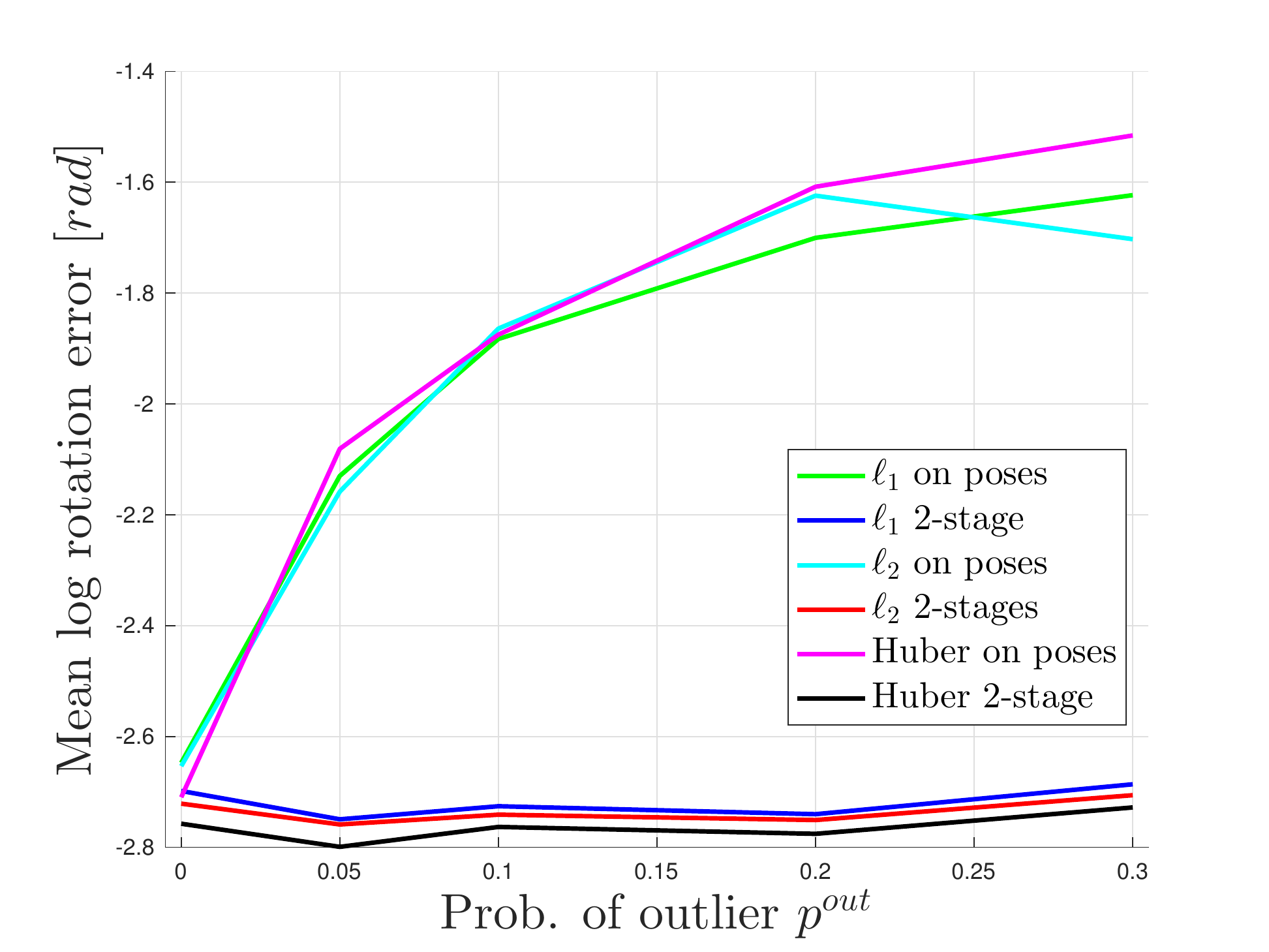}  \\ 
\end{minipage}%
\end{tabular} 
\vspace{-0.5cm} \\
\centering
(d) {\footnotesize {\itshape Geometric}, $n=50$.}
\end{minipage}%
\caption{Estimation errors for the 6 approaches proposed in this paper.
Left column: average translation error. Right column: average rotation error. 
 {\itshape Erd\H os-R\'enyi} random graphs with (a) $n=20$ and (b) $n=50$ nodes; {\itshape Geometric} random graphs with (c) $n=20$ and (d) $n=50$ nodes.
\label{fig:aveDistFromGT_small}}
\vspace{-4mm}
\end{figure}



\subsection{Sensor Failure Detection via Convex Relaxations}
\label{sec:OutIdentifResults}

In this section we discuss how to use the proposed techniques for outlier detection.
Besides being useful to improve the quality of the pose estimates, 
the capability of identifying outliers is an important tool for 
sensor failure detection.  

Here we show that since our techniques can produce an accurate 
estimate even in presence of a large amount of outliers, we can simply 
detect outliers by evaluating the residual error of each measurement.
Given an estimate $(\MR,\vt)$, with
 $\MR = [\MR_1,\ldots,\MR_n]$ and $\vt = [\vt_1,\ldots,\vt_n]$,
 and for each measurement $(\chR\ij,\cht\ij)$, we define the 
translation residual $\res\ij^t$ and the rotation residual $\res\ij^R$ as follows:
\begin{align}
\label{eq:residuals}
\res\ij^t(\MR,\vt) &= \| \MR_i \tran(\vt_j - \vt_i) -  \cht\ij \|_2 \\
\res\ij^R(\MR) &= \| \MR_i \tran \MR_j - \chR\ij \|_F .
\end{align}
A simple outlier detection technique would first solve one of the proposed convex 
relaxations to obtain an estimate of the pose graph configuration $(\hatMR,\hatvt)$.
Then it would use such estimate to compute the residual errors 
$\res\ij^t(\hatMR,\hatvt)$ and $\res\ij^R(\hatMR)$ for all $(i,j)\in\calE$.
Finally, it would classify as outliers the measurements for which either 
$\res\ij^t(\hatMR,\hatvt) \geq \eta^t$ or $\res\ij^R(\hatMR) \geq \eta^R$, where 
$\eta^t$ and $\eta^R$ are given thresholds on the maximum admissible error.


Fig.~\ref{fig:PRcurves} reports
 the precision/recall curves of the classification based on the solutions yielded by the 6 approaches 
 discussed in this paper. \red{The curves are obtained by varying the thresholds 
 $\eta^t$ and $\eta^R$ in the ranges $[0,\: 50]\: \text{m}$ and $[0,\: \pi]$,} respectively, and evaluating the precision/recall, defined as:
\begin{align}
\text{precision} = \frac{{\check n}_{out}}{({\check n}_{out}+{\bar n}_{out})}, \quad \quad  \quad
\text{recall} = \frac{{\check n}_{out}}{n_{out}}
\end{align}
where $n_{out}$ is the (true) number of outliers, ${\check n}_{out}$ is the number of correctly identified outliers 
 (true positives), and ${\bar n}_{out}$ is the number of inliers wrongly classified as outliers (false positives). 
As expected, the 2-stage techniques, which we have already seen to 
ensure accurate estimation independently on the amount of outliers 
lead to high precision/recall and are fairly insensitive to the choice of the thresholds $\eta^t$ and $\eta^R$.
\red{The 1-stage techniques, instead, lead to poorer performance.}

\vspace{-0.3cm}
\begin{figure}[h]
\begin{minipage}{\columnwidth}
\begin{tabular}{cc}
\hspace{-5 mm}
\begin{minipage}{0.5\columnwidth}%
\includegraphics[scale=0.235]{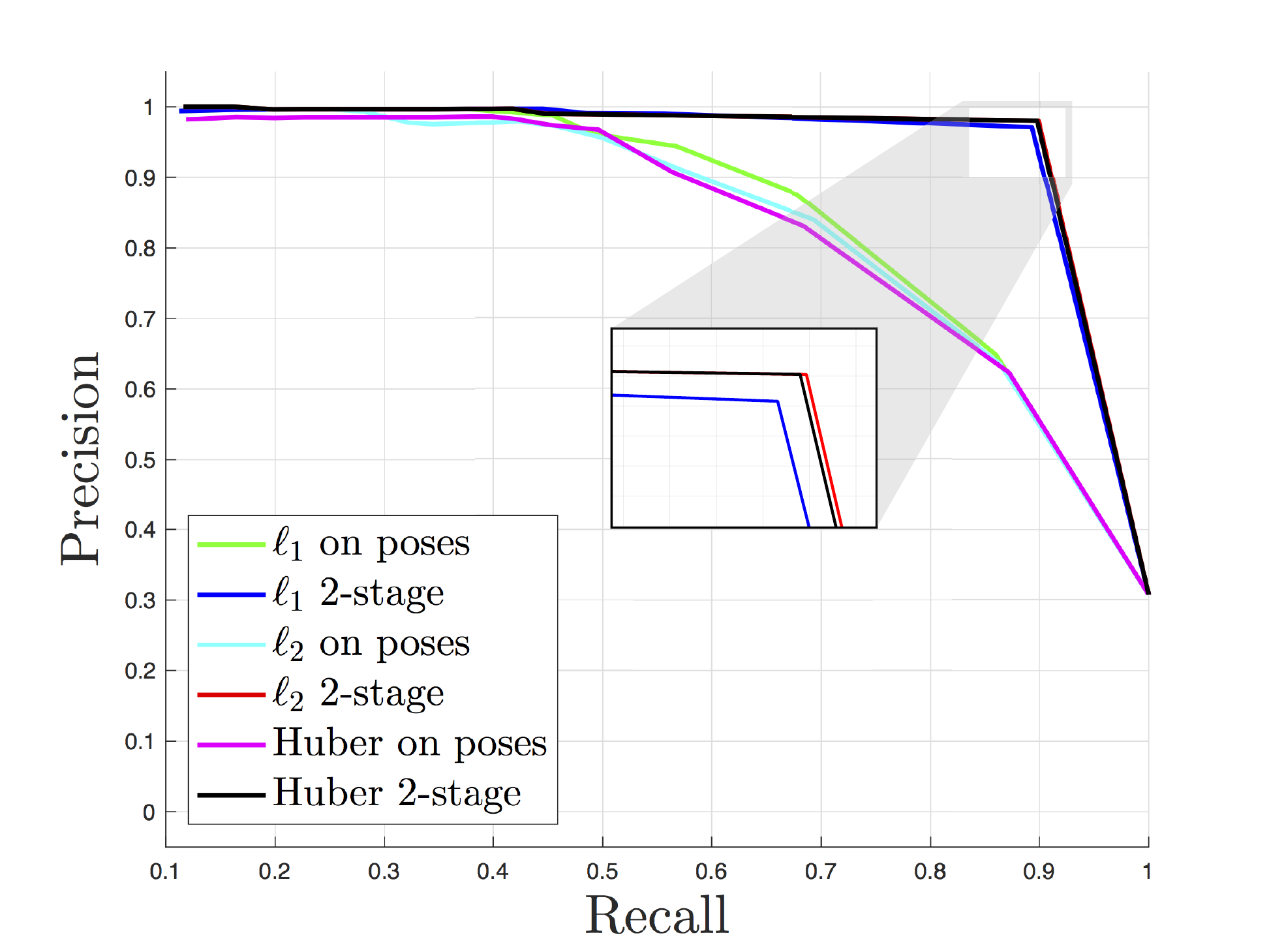}  \\ 
\centering (a) {\footnotesize $n=20$.}
\end{minipage}
&
\hspace{-5 mm}
\begin{minipage}{0.5\columnwidth}%
\includegraphics[scale=0.235]{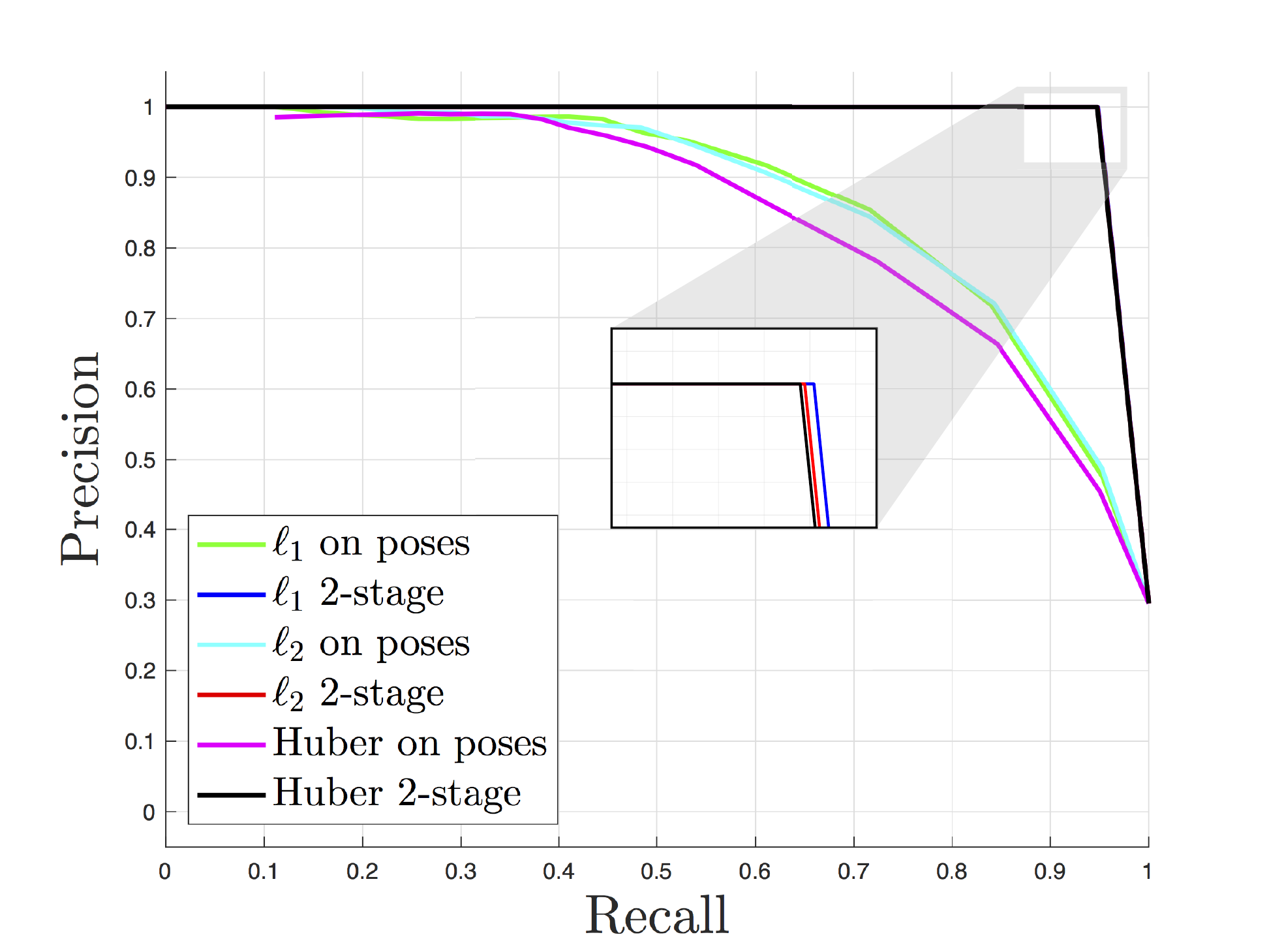}  \\ 
\centering (b) {\footnotesize $n=50$.}
\end{minipage}%
\end{tabular}
\end{minipage}
\caption{Precision/Recall curves for outliers detection based on the residual errors 
corresponding to the 6 techniques discussed in this paper.
\label{fig:PRcurves}}
\vspace{-0.3cm}
\end{figure}


\subsection{Comparison against the State of the Art}
\label{sec:gridAndComparisons}

In this section we show that the proposed approaches outperform the \emph{Dynamic Covariance Scaling} approach~\cite{Agarwal13icra}.

We consider a {\itshape Grid} graph, i.e., a Manhattan world model similar to the one used in related work~\cite{Carlone16tro-duality2D,Sunderhauf13icra}; \red{we simulate a complete grid, as shown in Fig.~\ref{fig:graphs}(c)}. The ground truth poses and measurements associated to the nodes and edges of the grid graph are computed 
as described in Section~\ref{sec:setup}; the {\itshape Grid} graph has an odometric path connecting all nodes.
Each node in the Grid has at most 4 neighbors (corresponding to adjacent nodes in the grid) hence it is less connected than 
the graph used in Section~\ref{sec:setup}, and closer to pose graphs found in practice.
For the comparison in this section, we focus on the 2-stage approaches, since we already observed 
that 
they ensure the best performance. 
We benchmark the proposed techniques against \gtwoo~\cite{Kuemmerle11icra}, a non-robust \PGO solver based on Gauss-Newton 
optimization, and \emph{Dynamic Covariance Scaling}~\cite{Agarwal13icra} (\dcs), a state-of-the-art robust \PGO approach.
Both approaches need an initial guess: in our tests we compute the initial guess from the odometric edges, 
following common practice. We use default parameters for both \gtwoo and \dcs.

Fig.~\ref{fig:aveDistFromGT_grid} shows the mean translation and rotation errors, averaged over 30 runs, 
 for increasing amount of outliers, and for Grid graphs with 20 and 50 nodes. 
Similarly to what we observed in the previous sections, all 2-stage approaches ensure similar performance. 
Moreover, they dominate \gtwoo and \dcs in all test instances. 
\gtwoo is not a robust solver, hence this result is expected. On the other hand, we observe that also 
\dcs has degraded performance: in our tests the initial guess to \dcs is computed from the odometry, and 
some of the odometric edges may be spurious; in those cases, \dcs starts from an incorrect initialization 
and its robust cost has the undesirable effect of ``disabling'' the correct edges which are not consistent with the 
initial guess. This further stresses the importance of designing global techniques for robust \PGO that 
do not rely on an initial guess. 
On the downside, we observe that while the proposed techniques outperform \dcs, their performance 
is significantly worse than the one observed in Section~\ref{sec:robustPGO}. 
This suggests that the performance of the proposed convex relaxations degrades for graphs with 
low connectivity. 

\newcommand{\myscale}{0.26}

\begin{figure}[h]
\begin{minipage}{\columnwidth}
\begin{tabular}{cc}
\hspace{-5 mm}
\begin{minipage}{0.5\columnwidth}%
\includegraphics[scale=\myscale]{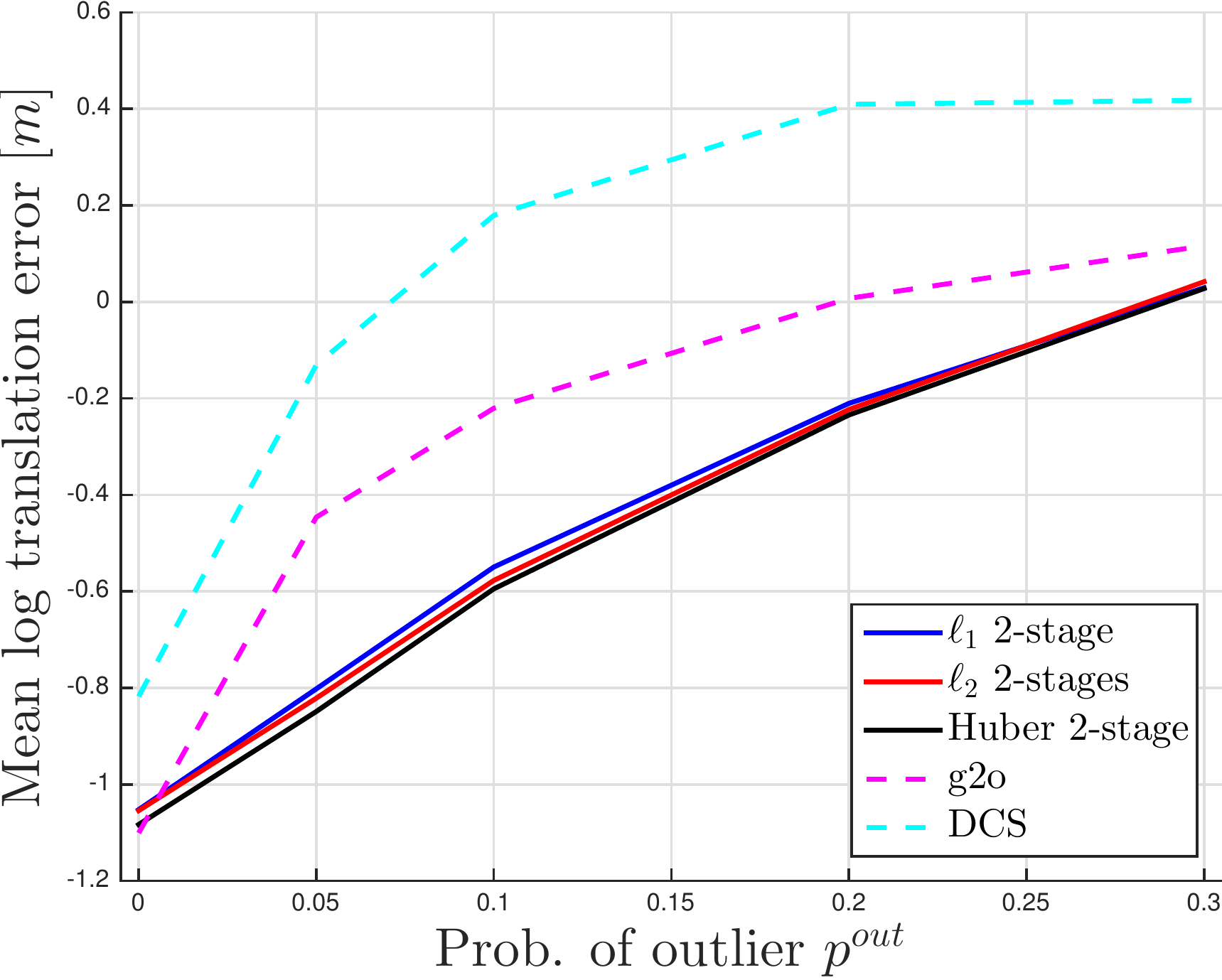}  \\ 
\end{minipage}%
&
\hspace{-3 mm}
\begin{minipage}{0.5\columnwidth}%
\centering
\vspace{-3.5mm}
\includegraphics[scale=\myscale]{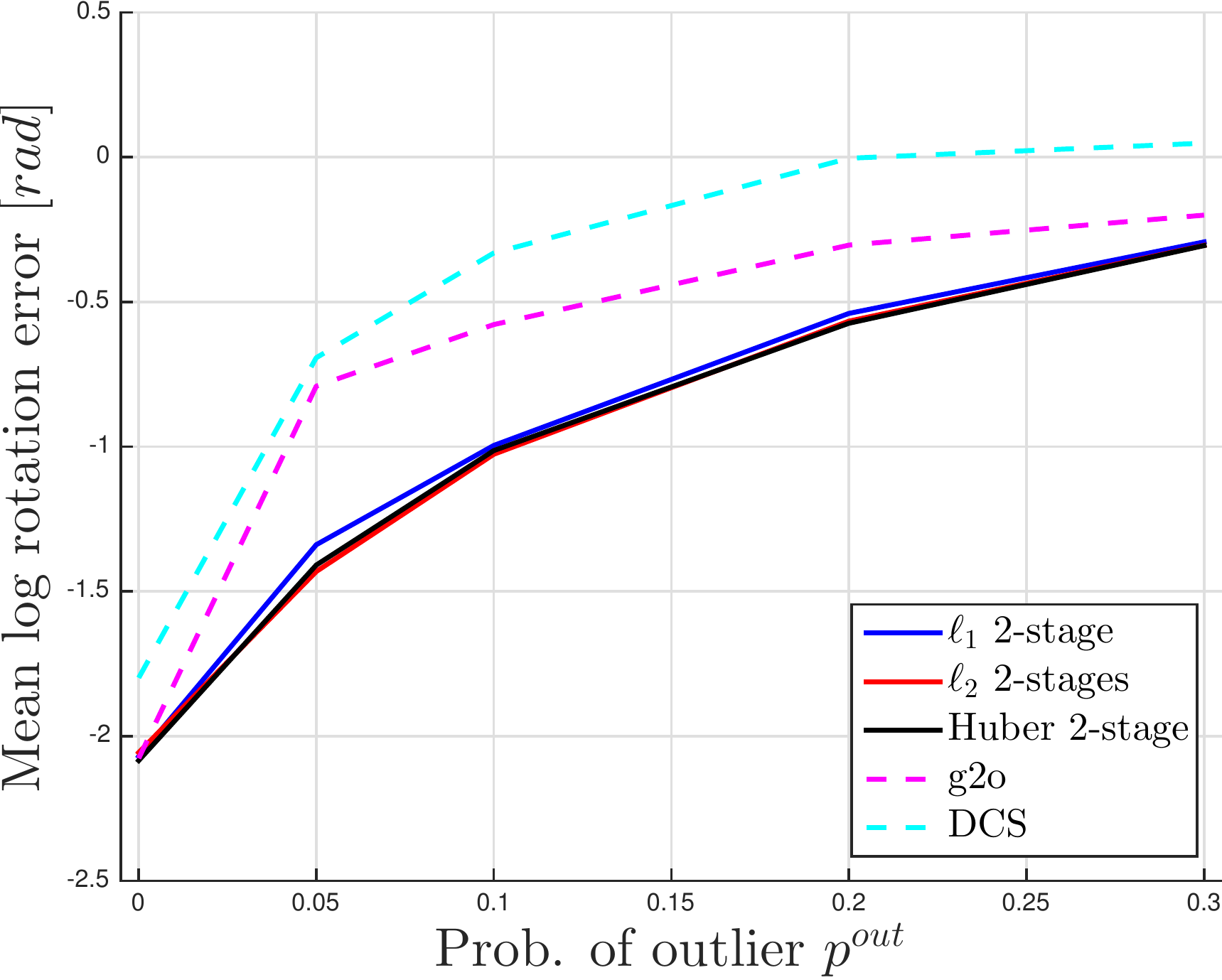}  \\ 
\end{minipage}%
\end{tabular} 
\vspace{-0.5cm} \\
\centering
(a) {\footnotesize  {\itshape Grid}, $n=20$.}
\end{minipage}%
\\
\begin{minipage}{\columnwidth}
\begin{tabular}{cc}
\hspace{-5 mm}
\begin{minipage}{0.5\columnwidth}%
\includegraphics[scale=\myscale]{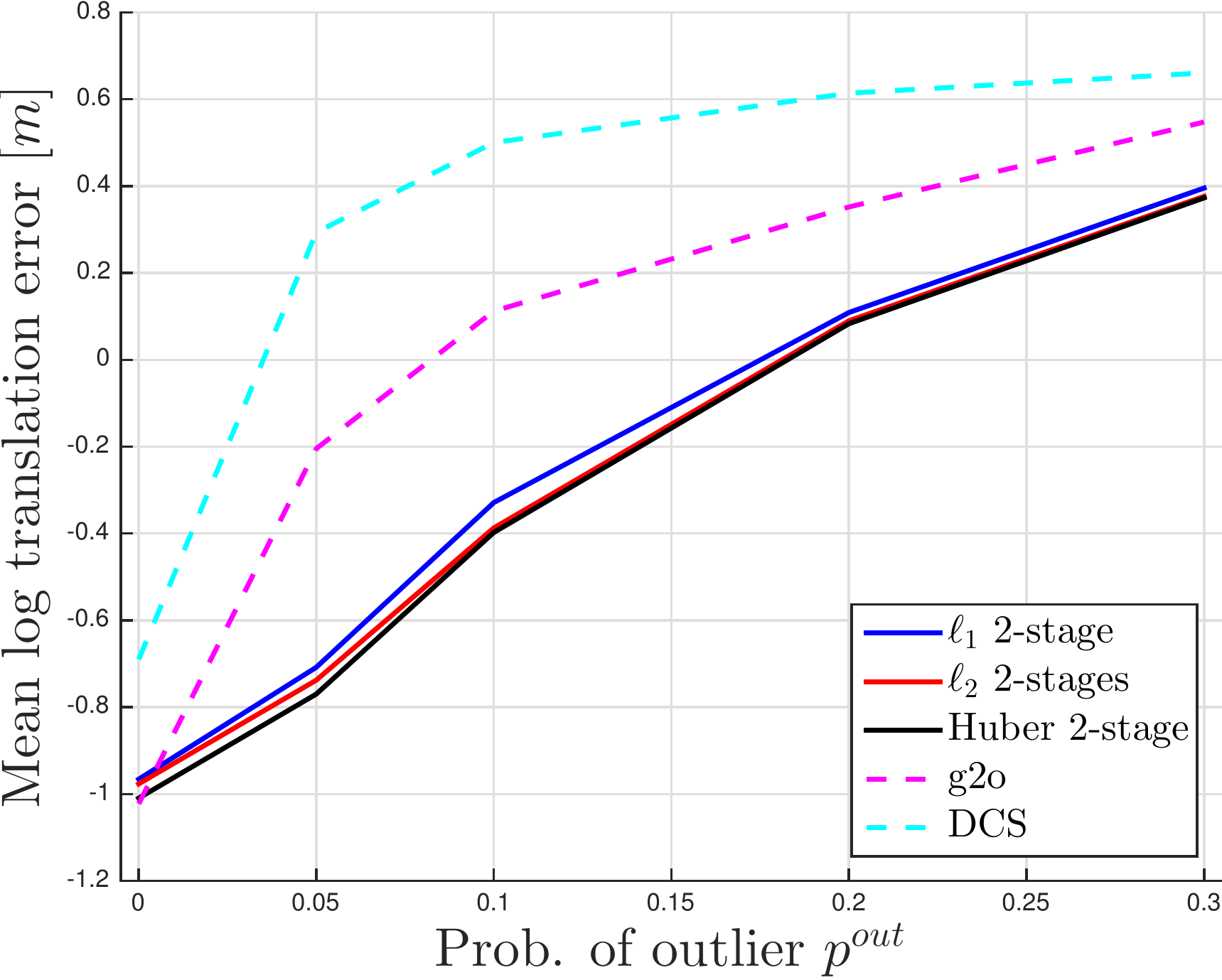}  \\ 
\end{minipage}%
&
\hspace{-3 mm}
\begin{minipage}{0.5\columnwidth}%
\centering
\vspace{-3.5mm}
\includegraphics[scale=\myscale]{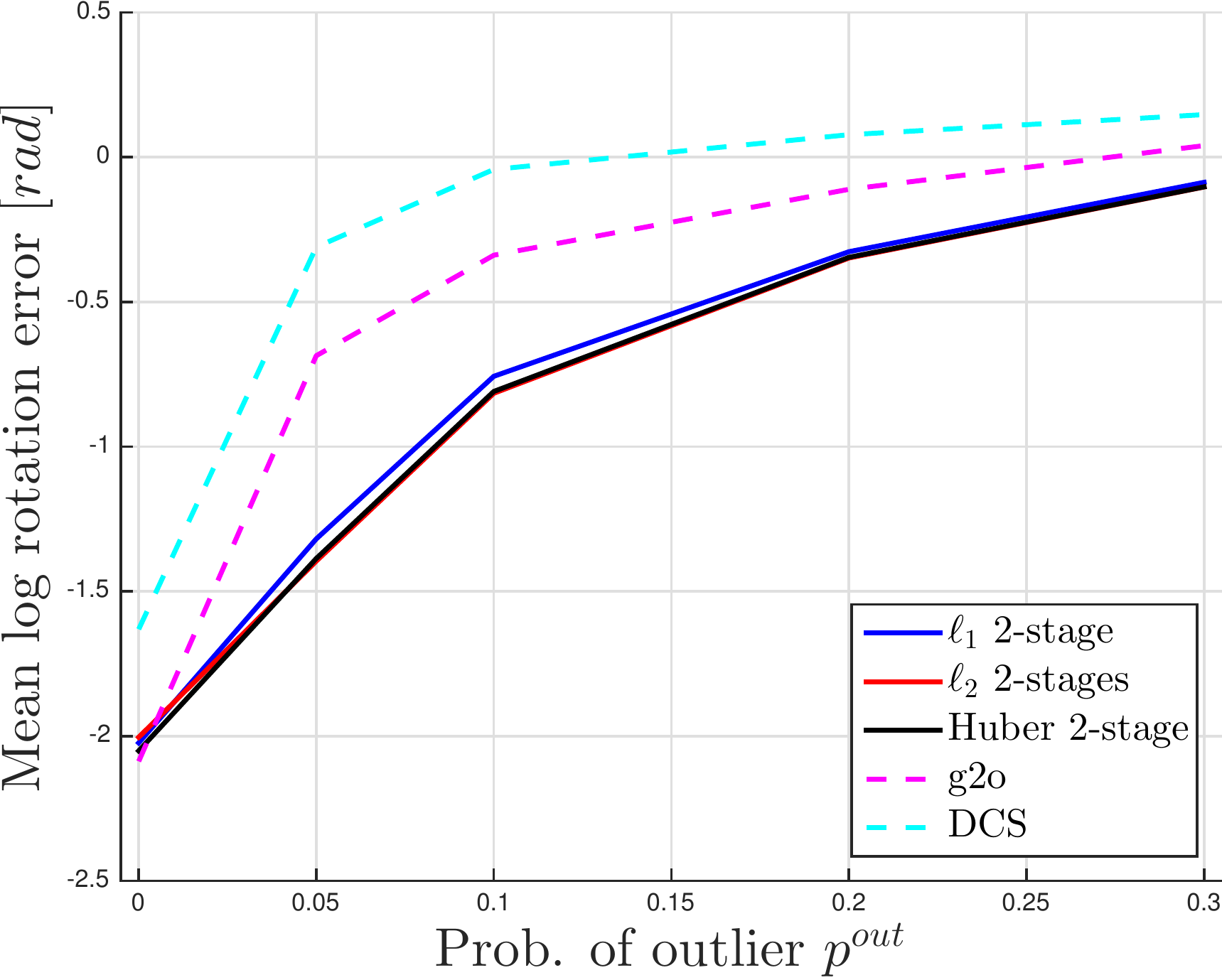}   \\ 
\end{minipage}%
\end{tabular}
\vspace{-0.5cm} \\
\centering
(b) {\footnotesize  {\itshape Grid}, $n=50$.} 
\end{minipage}%
\caption{Estimation errors for the 2-stage approaches proposed in this paper and for \gtwoo and \dcs.
Left column: average translation error. Right column: average rotation error. Grid graphs with (a) n = 20 and (b) n = 50 nodes.
\label{fig:aveDistFromGT_grid}}
\vspace{-4mm}
\end{figure}


\section{Conclusion}
\label{sec:9}
We proposed robust approaches for the solution of the PGO problem in presence of outliers.
We considered three robust cost function (least unsquared deviation, least absolute deviation, and Huber loss) 
and we provided a systematic way of computing a convex relaxation of the resulting (nonconvex) optimization problems 
(1-stage techniques). For each technique we also provided an alternative 2-stage solver, which decouples rotation and 
translation estimation (2-stage techniques), as well as conditions under which the relaxation is exact.
Numerical experiments suggest that while the relaxation implies a loss in accuracy in 1-stage techniques, 
the 2-stage techniques ensure accurate estimation in the face of many outliers, \red{particularly when tested on well-connected graphs. 
This also enables outlier identification and failure detection.} 

This work opens several avenues for future research, including the 
analysis of 3D problem instances, the study of the influence of the graph 
topology on the performance of the proposed convex relaxations, and 
a deeper theoretical investigation of the advantage of 2-stage techniques 
over 1-stage approaches. Future work also includes the investigation of 
ad-hoc numerical solvers to efficiently solve the SDP relaxations 
discussed in this paper: while
the general purpose SDP solver used 
in our current implementation scales poorly in the problem size 
(\red{it requires tens of seconds  to solve an instance with 50 poses}),  
our ultimate goal is to solve large PGO instances ($>1000$ poses) efficiently and robustly.

\vspace{0.1cm}
{\bf Acknowledgments.} The authors gratefully acknowledge Carlo Tommolillo for his help with the numerical evaluation.


\bibliographystyle{plain}
\bibliography{carlo,refs,myRefs}

\end{document}